\tikzset{vertex/.style={circle,draw,fill,inner sep=0pt,minimum size=1mm}}
\theoremstyle{plain}
\newtheorem{thm}{Theorem}
\newtheorem{prop}[thm]{Proposition}
\newtheorem{cor}[thm]{Corollary}
\theoremstyle{definition}
\newtheorem{definition}[thm]{Definition}
\newtheorem{exl}[thm]{Example}
\numberwithin{thm}{section}
\def\Z{{\mathbb Z}}
\begin{document}
\title{Connectivity Preserving Multivalued Functions in Digital Topology}
\author{Laurence Boxer
         \thanks{
    Department of Computer and Information Sciences,
    Niagara University,
    Niagara University, NY 14109, USA;
    and Department of Computer Science and Engineering,
    State University of New York at Buffalo.
    E-mail: boxer@niagara.edu
    }
\and
{P. Christopher Staecker
\thanks{
 Department of Mathematics,
 Fairfield University,
 Fairfield, CT 06823-5195, USA.
 E-mail: cstaecker@fairfield.edu
}
}
}
\date{ }
\maketitle

\begin{abstract}
We study {\em connectivity preserving multivalued functions}
~\cite{Kovalevsky}
between digital images. This notion generalizes that
of {\em continuous multivalued functions}
~\cite{egs08,egs12} studied mostly in the setting of the digital plane $\Z^2$. 
We show that connectivity preserving
multivalued functions, like continuous multivalued functions, 
are appropriate models for digital morpholological operations. 
Connectivity preservation, unlike continuity, is preserved by compositions, and generalizes easily 
to higher dimensions and arbitrary adjacency relations.

Key words and phrases: digital topology, digital image, continuous multivalued function, shy map, morphological operators, retraction, simple point
\end{abstract}

\section{Introduction}
Continuous functions between digital images were
introduced in ~\cite{Rosenfeld} and have been
explored in many subsequent papers. However, the notion
of a continuous function $f$ between digital images 
$X$ and $Y$ does
not always yield results analogous to what might
be expected from parallels with the Euclidean objects
modeled by $X$ and $Y$. For example, in Euclidean
space, if $X$ is a square and $Y$ is an arc such that
$Y \subset X$, then $Y$ is a continuous retract of 
$X$~\cite{Borsuk}. However,~\cite{Boxer94} gives an 
example of a digital square $X$ containing a digital
arc $Y$ such that $Y$ is not a continuous retract
of $X$.

In order to address such anomalies, digitally
continuous multivalued functions were introduced
~\cite{egs08,egs12}. These papers showed that in
some ways, digitally continuous multivalued functions
allow the digital world to model the Euclidean world
better than digitally
continuous single-valued functions. However, digitally
continuous multivalued functions have their own
anomalies, e.g., composition does not always 
preserve continuity among digitally
continuous multivalued functions~\cite{gs15}.

In this paper, we study connectivity preserving
multivalued functions between digital images and show that these
offer some advantages over
continuous multivalued functions. One of these
advantages is that the composition of connectivity
preserving multivalued functions between digital
images is connectivity preserving. Another advantage is that the concept of connectivity preservation of a map on a digital image can be defined without any reference to a particular realization of $X$ as a subset of $\Z^n$; by contrast, an example discussed in Section~\ref{prelims} shows that continuity of a multivalued map on $(X,\kappa)$ is heavily influenced by how $X$ is embedded in $\Z^n$. These
advantages help us to generalize easily 
our definitions and results to images of any dimension and adjacency relations.

There are also disadvantages in the use of connectivity preserving 
multivalued functions as compared with the use of continous
multivalued functions. In section~\ref{retract-section}, 
we show ways in which continuous multivalued functions 
better model retractions of Euclidean topology than
do connectivity preserving multivalued functions.

\section{Preliminaries}
\label{prelims}
We will assume familiarity with the topological theory of digital images. See, e.g., \cite{Boxer94} for the standard definitions. All digital images $X$ are assumed to carry their own adjacency relations (which may differ from one image to another). When we wish to emphasize the particular adjacency relation we write the image as $(X,\kappa)$, where $\kappa$ represents
the adjacency relation.

Among the commonly used adjacencies are the $c_u$-adjacencies.
Let $x,y \in \Z^n$, $x \neq y$. Let $u$ be an integer,
$1 \leq u \leq n$. We say $x$ and $y$ are $c_u$-adjacent if
\begin{itemize}
\item There are at most $u$ indices $i$ for which 
      $|x_i - y_i| = 1$.
\item For all indices $j$ such that $|x_j - y_j| \neq 1$ we
      have $x_j=y_j$.
\end{itemize}
We often label a $c_u$-adjacency by the number of points
adjacent to a given point in $\Z^n$ using this adjacency.
E.g.,
\begin{itemize}
\item In $\Z^1$, $c_1$-adjacency is 2-adjacency.
\item In $\Z^2$, $c_1$-adjacency is 4-adjacency and
      $c_2$-adjacency is 8-adjacency.
\item In $\Z^3$, $c_1$-adjacency is 6-adjacency,
      $c_2$-adjacency is 18-adjacency, and $c_3$-adjacency
      is 26-adjacency.
\end{itemize}

For much of the paper, we will not need to assume that $(X,\kappa)$ is embedded as a subset of $(\Z^n, \kappa)$ for some particular $n$.

A subset $Y$ of a digital image $(X,\kappa)$ is
{\em $\kappa$-connected}~\cite{Rosenfeld},
or {\em connected} when $\kappa$
is understood, if for every pair of points $a,b \in Y$ there
exists a sequence $\{y_i\}_{i=0}^m \subset Y$ such that
$a=y_0$, $b=y_m$, and $y_i$ and $y_{i+1}$ are 
$\kappa$-adjacent for $0 \leq i < m$.
The following generalizes a definition of
~\cite{Rosenfeld}.

\begin{definition}\label{continuous}
{\rm ~\cite{Boxer99}}
Let $(X,\kappa)$ and $(Y,\lambda)$ be digital images. A function
$f: X \rightarrow Y$ is $(\kappa,\lambda)$-continuous if for
every $\kappa$-connected $A \subset X$ we have that
$f(A)$ is a $\lambda$-connected subset of $Y$. 
\end{definition}

When the adjacency relations are understood, we will simply say that $f$ is \emph{continuous}. Continuity can be reformulated in terms of adjacency of points:
\begin{thm}
{\rm ~\cite{Rosenfeld,Boxer99}}
A function $f:X\to Y$ is continuous if and only if, for any adjacent points $x,x'\in X$, the points $f(x)$ and $f(x')$ are equal or adjacent. \qed
\end{thm}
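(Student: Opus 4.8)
The plan is to prove the two directions of the biconditional separately, both by appeal to Definition~\ref{continuous}, which characterizes continuity via the preservation of connectedness of arbitrary subsets.

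For the forward direction, suppose $f$ is continuous and let $x, x' \in X$ be adjacent points. The key observation is that the two-point set $A = \{x, x'\}$ is $\kappa$-connected, since the single-step sequence $x, x'$ witnesses connectedness. By Definition~\ref{continuous}, $f(A) = \{f(x), f(x')\}$ is therefore $\lambda$-connected. A connected set with at most two elements must consist of a single point (when $f(x) = f(x')$) or of two $\lambda$-adjacent points; this is exactly the claim that $f(x)$ and $f(x')$ are equal or adjacent. I would verify this small-set fact directly from the definition of connectedness: any connecting sequence from $f(x)$ to $f(x')$ inside a two-point set forces the two points to be adjacent unless they coincide.

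For the converse, assume the adjacency condition and let $A \subset X$ be $\kappa$-connected; I must show $f(A)$ is $\lambda$-connected. Take any two points $p, q \in f(A)$, say $p = f(a)$ and $q = f(b)$ with $a, b \in A$. Since $A$ is connected, there is a sequence $a = y_0, y_1, \dots, y_m = b$ of pairwise adjacent points in $A$. Applying the hypothesis to each adjacent pair $y_i, y_{i+1}$, the images $f(y_i)$ and $f(y_{i+1})$ are equal or $\lambda$-adjacent. Deleting repetitions from the sequence $f(y_0), \dots, f(y_m)$ yields a sequence in $f(A)$ in which consecutive points are genuinely $\lambda$-adjacent, connecting $p$ to $q$. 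Hence $f(A)$ is $\lambda$-connected.

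I do not anticipate a serious obstacle here; the result is essentially a routine unwinding of the definitions, and the author almost certainly cites it rather than reproving it (the statement ends with \qed and references~\cite{Rosenfeld,Boxer99}). The only point requiring a little care is the bookkeeping in the converse when consecutive images coincide, so that one produces a valid connecting sequence of strictly adjacent points; handling the degenerate cases ($A$ empty or a singleton, or $p = q$) is immediate and can be dispatched in a sentence.
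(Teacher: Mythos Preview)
Your argument is correct, and you have already anticipated the situation exactly: the paper does not supply a proof of this theorem at all but simply cites \cite{Rosenfeld,Boxer99} and appends \qed. There is therefore nothing to compare against; your direct unwinding of Definition~\ref{continuous} is the standard route and is carried out carefully, including the minor bookkeeping about repeated images in the converse.
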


For two subsets $A,B\subset X$, we will say that $A$ and $B$ are \emph{adjacent} when there exist points $a\in A$ and $b\in B$ such that $a$ and $b$ are equal or adjacent. Thus sets with nonempty intersection are automatically adjacent, while disjoint sets may or may not be adjacent. It is easy to see that a union of connected adjacent sets is connected. 

A \emph{multivalued function} $f:X\to Y$ assigns a subset of $Y$ to each point of $x$. We will  write $f:X \multimap Y$. For $A \subset X$ and a multivalued function $f:X\multimap Y$, let $f(A) = \bigcup_{x \in a} f(x)$. 

\begin{definition}
\label{mildly}
\rm{\cite{Kovalevsky}}
A multivalued function $f:X\multimap Y$ is \emph{connectivity preserving} if $f(A)\subset Y$ is connected whenever $A\subset X$ is connected.
\end{definition}

As is the case with Definition \ref{continuous}, we can reformulate connectivity preservation in terms of adjacencies.

\begin{thm}
\label{mildadj}
A multivalued function $f:X \multimap Y$ is \emph{connectivity preserving} if and only if the following are satisfied:
\begin{itemize}
\item For every $x \in X$, $f(x)$ is a connected subset of $Y$.
\item For any adjacent points $x,x'\in X$, the sets $f(x)$ and $f(x')$ are adjacent.
\end{itemize}
\end{thm}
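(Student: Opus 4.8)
The plan is to prove both directions of the biconditional, treating it as a characterization of connectivity preservation in terms of purely local (point and adjacent-pair) data.

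For the forward direction, I would assume $f$ is connectivity preserving and derive the two bullet conditions. The first condition is immediate: each singleton $\{x\}$ is connected, so $f(\{x\}) = f(x)$ must be connected by definition. For the second condition, given adjacent points $x, x' \in X$, the set $\{x, x'\}$ is connected (the two points form a connecting sequence of length one). Hence $f(\{x,x'\}) = f(x) \cup f(x')$ is connected. I would then argue that if $f(x)$ and $f(x')$ were not adjacent, their union could not be connected --- any connecting path in $f(x) \cup f(x')$ from a point of $f(x)$ to a point of $f(x')$ would have to use an adjacency crossing between the two sets, which is exactly what adjacency of $f(x)$ and $f(x')$ asserts. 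So adjacency of the image sets is forced.

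For the converse, I would assume both bullet conditions and show $f(A)$ is connected for every connected $A \subset X$. Fix a connected $A$ and two points $p, q \in f(A)$, so $p \in f(a)$ and $q \in f(b)$ for some $a, b \in A$. Since $A$ is connected, there is a sequence $a = y_0, y_1, \dots, y_m = b$ of points of $A$ with consecutive points adjacent. The idea is to build a connected subset of $f(A)$ containing both $p$ and $q$ by chaining together the pieces $f(y_i)$. Each $f(y_i)$ is connected by the first condition, and consecutive pieces $f(y_i)$, $f(y_{i+1})$ are adjacent by the second condition. Invoking the observation stated just before Definition~\ref{mildly} --- that a union of connected adjacent sets is connected --- I would show by induction on $i$ that $\bigcup_{k=0}^{i} f(y_k)$ is connected. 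Since this union is contained in $f(A)$ and contains both $p$ and $q$, the points $p$ and $q$ lie in a common connected subset of $f(A)$, establishing that $f(A)$ is connected.

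The main obstacle, such as it is, lies in the forward direction's second condition: one must argue carefully that connectedness of the union $f(x) \cup f(x')$ genuinely forces adjacency of the two sets, rather than merely allowing it. The cleanest route is the contrapositive --- if $f(x)$ and $f(x')$ are non-adjacent, then no adjacency edge joins the two sets, so any connecting sequence starting in $f(x)$ can never reach $f(x')$, contradicting connectedness of the union. The converse direction is essentially a routine induction once the ``union of connected adjacent sets is connected'' lemma is in hand, so I expect it to present no real difficulty.
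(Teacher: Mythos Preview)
Your proposal is correct and follows essentially the same approach as the paper's proof: both directions are argued exactly as you describe, using singletons and adjacent pairs for one implication and chaining the connected sets $f(y_i)$ along a path in $A$ for the other. The only cosmetic differences are that the paper treats the two implications in the opposite order and asserts the union $\bigcup_i f(y_i)$ is connected in one step rather than by explicit induction.
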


\begin{proof}
First assume that $f$ satisfies the two conditions above, let $A$ be connected, and we will show that $f(A)$ is connected. Take two points $y,y' \in f(A)$, and we will find a connected subset $B \subset f(A)$ containing $y$ and $y'$, and thus $y$ and $y'$ are connected by a path in $f(A)$. Since $y,y' \in f(A)$, there are points $x,x'\in A$ with $y\in f(x)$ and $y'\in f(x')$. Since $A$ is connected there is a path $x=x_0, x_1,\dots, x_k=x'$ with $x_i\in A$ and $x_i$ adjacent to $x_{i+1}$ for each $i$.

By our hypotheses, we have $f(x_i)$ connected and $f(x_i)$ adjacent to $f(x_{i+1})$ for each $i$. Thus the union 
\[ B = \bigcup_{i=0}^k f(x_i) \]
is connected, since it is a union of connected adjacent sets. So $B \subset f(A)$ is connected and contains $y$ and $y'$, which concludes the proof that $f(A)$ is connected.

Now for the converse assume that $f$ is connectivity preserving, and we will prove the two properties in the statement of the theorem. The first property is trivially satisfied since $f(x) = f(\{x\})$ and $\{x\}$ is connected. To prove the second property, assume that $x,x'\in X$ are adjacent, and we will show that $f(x)$ and $f(x')$ are  adjacent. 

Since $x$ and $x'$ are adjacent, the set $\{x,x'\}$ is connected and thus the set 
$f(\{x,x'\}) = f(x) \cup f(x')$ is connected. Therefore, $f(x)$ must be adjacent to $f(x')$.
\end{proof}

Definition~\ref{mildly} is related to a definition of multivalued continuity for subsets of $\Z^n$ given and explored by Escribano, Giraldo, and Sastre in \cite{egs08, egs12} based on subdivisions. (These papers make a small error with respect to compositions, which is corrected in \cite{gs15}.) Their definitions are as follows:
\begin{definition}
For any positive integer $r$, the \emph{$r$-th subdivision} of $\Z^n$ is
\[ \Z_r^n = \{ (z_1/r, \dots, z_n/r) \mid z_i \in \Z \}. \]
An adjacency relation $\kappa$ on $\Z^n$ naturally induces an adjacency relation (which we also call $\kappa$) on $\Z_r^n$ as follows: $(z_1/r, \dots, z_n/r), (z'_1/r, \dots, z'_n/r)$ are adjacent in $\Z^n_r$ if and only if $(z_1, \dots, z_n)$ and $(z_1, \dots, z_n)$ are adjacent in $\Z^n$.

Given a digital image $(X,\kappa) \subset (\Z^n,\kappa)$, the \emph{$r$-th subdivision} of $X$ is 
\[ S(X,r) = \{ (x_1,\dots, x_n) \in \Z^n_r \mid (\lfloor x_1 \rfloor, \dots, \lfloor x_n \rfloor) \in X \}. \]

Let $E_r:S(X,r) \to X$ be the natural map sending $(x_1,\dots,x_n) \in S(X,r)$ to $(\lfloor x_1 \rfloor, \dots, \lfloor x_n \rfloor)$. 

For a digital image $(X,\kappa) \subset (\Z^n,\kappa)$, a function $f:S(X,r) \to Y$ \emph{induces a multivalued function $F:X\multimap Y$} as follows:
\[ F(x) = \bigcup_{x' \in E^{-1}_r(x)} \{f(x')\}. \]

A multivalued function $F:X\multimap Y$ is called \emph{continuous} when there is some $r$ such that $F$ is induced by some single valued continuous function $f:S(X,r) \to Y$. 
\end{definition}

An example of two spaces and their subdivisions is given in Figure \ref{subdivfig}.

\begin{figure}
\begin{center}
\begin{tabular}{cccc}
\begin{tikzpicture}[scale=.4]
\foreach \x/\y in {1/0,0/1} {
	\filldraw[fill=gray, xshift=2*\x cm,yshift=2*\y cm]
		(45:1.2) \foreach \t in {135,225,315,45} { -- (\t:1.2) };
}
\end{tikzpicture}\qquad
&
\begin{tikzpicture}[scale=.2]
\foreach \x/\y in {2/0,2/1,3/0,3/1,0/2,0/3,1/2,1/3} {
	\filldraw[fill=gray, xshift=2*\x cm,yshift=2*\y cm]
		(45:1.2) \foreach \t in {135,225,315,45} { -- (\t:1.2) };
}
\end{tikzpicture}\qquad
&
\begin{tikzpicture}[scale=.4]
\foreach \x/\y in {0/0,0/1} {
	\filldraw[fill=gray, xshift=2*\x cm,yshift=2*\y cm]
		(45:1.2) \foreach \t in {135,225,315,45} { -- (\t:1.2) };
}
\end{tikzpicture}\qquad
&
\begin{tikzpicture}[scale=.2]
\foreach \x/\y in {0/0,0/1,1/0,1/1,0/2,0/3,1/2,1/3} {
	\filldraw[fill=gray, xshift=2*\x cm,yshift=2*\y cm]
		(45:1.2) \foreach \t in {135,225,315,45} { -- (\t:1.2) };
}
\end{tikzpicture}
\\
$X$ & $S(X,2)$ & $Y$ & $S(Y,2)$
\end{tabular}
\end{center}
\caption{Two images $X$ and $Y$ with their second subdivisions. 
\label{subdivfig}}
\end{figure}
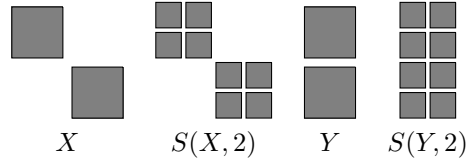

Note that the subdivision construction (and thus the notion of continuity) depends on the particular embedding of $X$ as a subset of $\Z^n$. In particular we may have $X, Y \subset \Z^n$ with $X$ isomorphic to $Y$ but $S(X,r)$ not isomorphic to $S(Y,r)$. This in fact is the case for the two images in Figure \ref{subdivfig}, when we use 8-adjacency for all images. The spaces $X$ and $Y$ in the figure are isomorphic, each being a set of two adjacent points. But $S(X,2)$ and $S(Y,2)$ are not isomorphic since $S(X,2)$ can be disconnected by removing a single point, while this is impossible in $S(Y,2)$. 

The definition of connectivity preservation makes no reference to $X$ as being embedded inside of any particular integer lattice $\Z^n$.

\begin{prop}
\label{pt-images-connected}
\rm{\cite{egs08,egs12}}
Let $F:X\multimap Y$ be a continuous multivalued function
between digital images. Then
\begin{itemize}
\item for all $x \in X$, $F(x)$ is connected; and
\item for all connected subsets $A$ of $X$, $F(A)$ is connected.
\qed
\end{itemize}
\end{prop}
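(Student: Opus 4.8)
The plan is to exploit the defining relation $F = f\circ E_r^{-1}$: from the formula $F(x)=\bigcup_{x'\in E_r^{-1}(x)}\{f(x')\}$ one gets, for every subset $S\subseteq X$, that $F(S)=f\!\left(E_r^{-1}(S)\right)$, where $f:S(X,r)\to Y$ is the single-valued continuous map inducing $F$. Since $f$ is $(\kappa,\lambda)$-continuous, it carries connected subsets of $S(X,r)$ to connected subsets of $Y$ by Definition~\ref{continuous}. So both bullets follow once I show that the relevant preimages under $E_r$ are connected: the first bullet reduces to ``$E_r^{-1}(x)$ is connected,'' and the second to ``$E_r^{-1}(A)$ is connected whenever $A$ is connected.''

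The heart of the argument is therefore a lemma about the fiber structure of $E_r$, which I would phrase as: the multivalued map $G:X\multimap S(X,r)$ given by $G(x)=E_r^{-1}(x)$ is connectivity preserving. I would verify this using the adjacency criterion of Theorem~\ref{mildadj}. For the first condition, note that $E_r^{-1}(x)$ is exactly the block $\{\,x+(a_1/r,\dots,a_n/r)\mid a_i\in\{0,\dots,r-1\}\,\}$ of $r^n$ subdivision points sharing the floor $x$; this is a scaled copy of a full integer cube and is connected in the induced adjacency. For the second condition, given $\kappa$-adjacent $x,x'\in X$ with $x'-x=\sum_{i\in S}\varepsilon_i e_i$ (where $\varepsilon_i=\pm1$ and $|S|\le u$), I would exhibit explicit adjacent representatives: in the block over $x$ choose the offset $(r-1)/r$ in coordinate $i$ when $\varepsilon_i=+1$ and $0$ when $\varepsilon_i=-1$, and in the block over $x'$ choose the opposite corner. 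A direct computation shows these two points differ by exactly $1/r$ in each coordinate of $S$ and agree elsewhere, so they are adjacent in $S(X,r)$, while both have the correct floors and hence lie in $S(X,r)$. Thus $G(x)$ and $G(x')$ are adjacent, and $G$ is connectivity preserving.

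With the lemma in hand the proposition is immediate. For the first bullet, $E_r^{-1}(x)=G(x)$ is connected, so $F(x)=f(E_r^{-1}(x))$ is connected. For the second, if $A$ is connected then $E_r^{-1}(A)=G(A)$ is connected because $G$ is connectivity preserving, and hence $F(A)=f(E_r^{-1}(A))$ is connected.

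I expect the main obstacle to be the second condition of the lemma---the coordinate bookkeeping needed to locate adjacent representatives in neighboring fibers---since one must handle an arbitrary $c_u$-adjacency (several coordinates changing by $\pm1$ simultaneously) and confirm that the chosen corners are genuinely $c_u$-adjacent in the subdivision rather than merely close. Everything else is a routine application of the already-established equivalence in Theorem~\ref{mildadj} together with single-valued connectivity preservation.
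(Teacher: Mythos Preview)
The paper does not supply its own proof of this proposition; it is quoted from \cite{egs08,egs12} and closed with a \qed. So there is nothing in the present paper to compare your argument against.

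That said, your argument is correct and is precisely the natural one (and essentially what the cited sources do): factor $F$ as $f\circ E_r^{-1}$, use that the single-valued continuous $f$ carries connected sets to connected sets, and reduce everything to showing that $E_r^{-1}$, viewed as a multivalued map $X\multimap S(X,r)$, is connectivity preserving. Your verification of the latter via Theorem~\ref{mildadj} is the right decomposition, and the explicit corner-point construction for adjacent fibers is exactly the computation needed; your bookkeeping checks out. One small remark: you have written the fiber-adjacency step specifically for $c_u$-adjacencies, whereas the paper's definition of subdivision allows an arbitrary adjacency $\kappa$ on $\Z^n$. This is not a real gap in context---the sources \cite{egs08,egs12} work only with $4$- and $8$-adjacency---but if you want full generality you would need to rephrase the corner selection in terms of whatever structure $\kappa$ carries.
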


\begin{thm}
\label{cont-hierarchy}
For $(X,\kappa) \subset (\Z^n,\kappa)$, if $F:X\multimap Y$ 
is a continuous multivalued function, then $F$ is connectivity preserving.
\end{thm}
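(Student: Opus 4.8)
The quickest route is simply to observe that the second conclusion of Proposition~\ref{pt-images-connected}---that $F(A)$ is connected for every connected $A\subset X$---is word-for-word the defining condition of connectivity preservation in Definition~\ref{mildly}. Since a continuous multivalued $F$ on $(X,\kappa)\subset(\Z^n,\kappa)$ satisfies that proposition by hypothesis, the theorem follows in one line. However, because that proposition is only quoted from~\cite{egs08,egs12}, I would prefer a self-contained argument straight from the subdivision definition, which also makes transparent \emph{why} the implication holds.

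The plan is to unwind the definition of continuity. By hypothesis there is an integer $r$ and a single-valued $(\kappa,\lambda)$-continuous $f:S(X,r)\to Y$ with $F(x)=f(E_r^{-1}(x))$ for each $x$, and hence $F(A)=f(E_r^{-1}(A))$ for every $A\subset X$. Thus it suffices to show that $E_r^{-1}(A)$ is $\kappa$-connected whenever $A$ is: once this is established, Definition~\ref{continuous} applied to the single-valued continuous map $f$ immediately yields that $F(A)=f(E_r^{-1}(A))$ is $\lambda$-connected, which is exactly the condition in Definition~\ref{mildly}. (Equivalently, one could instead verify the two bullet conditions of Theorem~\ref{mildadj}, the first from connectedness of a single fiber and the second from adjacency of neighboring fibers.)

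To prove the reduction I would establish two geometric facts about $E_r^{-1}$. First, each fiber $E_r^{-1}(x)$ is the full grid box $\{(x_1+j_1/r,\dots,x_n+j_n/r)\mid 0\le j_i\le r-1\}$, which is $\kappa$-connected under the adjacency induced on $\Z_r^n$. Second, if $x,x'\in X$ are $\kappa$-adjacent, then $E_r^{-1}(x)$ and $E_r^{-1}(x')$ are adjacent in $S(X,r)$: after scaling representatives by $r$, one checks that suitable corner points of the two boxes (the top face of one box against the adjacent bottom face of the other) become $c_u$-adjacent in $\Z^n$. Granting these, $E_r^{-1}(A)=\bigcup_{x\in A}E_r^{-1}(x)$ is a union of connected sets whose adjacencies mirror those of $A$, so it is connected because $A$ is connected and a union of connected adjacent sets is connected.

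I expect the main obstacle to be the second geometric fact---verifying that adjacency of $x$ and $x'$ in the quotient image lifts to an adjacency between points of the two fibers inside $S(X,r)$. This requires care with the induced adjacency on $\Z_r^n$ (defined by scaling representatives back up to $\Z^n$) and with the several cases of $c_u$-adjacency: for a diagonal $c_u$-adjacency, where $x$ and $x'$ differ by $\pm 1$ in up to $u$ coordinates, one must select the correct pair of corner points of the two boxes rather than arbitrary representatives, so that exactly the same set of coordinates differs by a single unit after scaling. By contrast, the connectedness of an individual fiber is routine once its box description is in hand.
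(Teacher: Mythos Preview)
Your first paragraph is exactly the paper's proof: it invokes Proposition~\ref{pt-images-connected} (that $F(A)$ is connected for every connected $A$) and observes this is the defining condition of Definition~\ref{mildly}. That is all the paper does.

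Your self-contained expansion via the subdivision definition is correct but goes well beyond the paper, which is content to quote the proposition from~\cite{egs08,egs12} without reproving it. The fiber description and the two geometric facts you isolate (connectedness of each $E_r^{-1}(x)$ as a grid box, and adjacency of $E_r^{-1}(x)$ and $E_r^{-1}(x')$ via suitable corner points when $x,x'$ are $c_u$-adjacent) are exactly what one would need to make Proposition~\ref{pt-images-connected} self-contained, and your identification of the diagonal-adjacency case as the only delicate point is accurate. One small caveat: your argument is written for $c_u$-adjacencies specifically, whereas the theorem is stated for a general $\kappa$; the paper sidesteps this by citing the proposition rather than redoing the geometry.
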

\begin{proof}
By Proposition~\ref{pt-images-connected}, for all connected subsets $A$ of $X$, $F(A)$ is connected. The assertion follows
from Definition~\ref{mildly}.
\end{proof}

The subdivision machinery often makes it difficult to prove that a given multivalued function is continuous. By contrast, many maps can easily be shown to be connectivity preserving. 

\begin{prop}
\label{1-to-all}
Let $X$ and $Y$ be digital images.
Suppose $Y$ is connected. Then the
multivalued function $f: X \multimap Y$ defined by
$f(x)=Y$ for all $x \in X$ is connectivity preserving.
\end{prop}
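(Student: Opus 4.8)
The plan is to appeal directly to Definition~\ref{mildly}, which demands only that $f(A)$ be connected whenever $A \subset X$ is connected. The whole argument rests on the observation that $f$ is constant on points, so the union defining its set-image collapses.

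First I would fix an arbitrary connected $A \subset X$ and compute $f(A)$. Since $f(x) = Y$ for every $x \in X$, we have
\[ f(A) = \bigcup_{x \in A} f(x) = \bigcup_{x \in A} Y, \]
which equals $Y$ whenever $A$ is nonempty. I would then invoke the hypothesis that $Y$ is connected to conclude that $f(A) = Y$ is connected, which is exactly what Definition~\ref{mildly} requires. This settles the proposition outright.

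As an alternative route I could instead verify the two pointwise conditions of Theorem~\ref{mildadj}: the first holds because $f(x) = Y$ is connected by hypothesis, and the second holds because for adjacent $x, x'$ the sets $f(x)$ and $f(x')$ both equal $Y$, hence have nonempty intersection and are therefore adjacent. Either approach is equally short.

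There is essentially no obstacle in this argument; the only point meriting a moment's care is the degenerate case $A = \emptyset$ (if empty sets are admitted as connected), where $f(A) = \emptyset$ is trivially connected, and the implicit nonemptiness of $Y$, which is automatic for a digital image. The collapse of the union is the single substantive step, and it is immediate from the constant definition of $f$.
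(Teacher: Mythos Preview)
Your proof is correct and follows exactly the approach the paper indicates: the paper's proof consists of the single sentence ``This follows easily from Definition~\ref{mildly},'' and you have simply spelled out that easy verification. Your alternative via Theorem~\ref{mildadj} is also valid but unnecessary here.
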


\begin{proof}
This follows easily from Definition~\ref{mildly}.
\end{proof}

\begin{prop}
\label{finite-to-infinite}
Let $F: (X,\kappa) \multimap (Y,\lambda)$ be a multivalued
surjection between digital images $(X,\kappa),(Y,\kappa)\subset (\Z^n, \kappa)$. If $X$ is finite and $Y$
is infinite, then
$F$ is not continuous.
\end{prop}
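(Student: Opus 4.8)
The plan is to derive a contradiction from a cardinality count. The guiding observation is that multivalued continuity is defined through an inducing single-valued function on a subdivision, and the subdivision of a \emph{finite} image is again finite; hence the total image $F(X)$ is forced to be finite, which clashes with surjectivity onto an infinite $Y$.

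First I would suppose, for contradiction, that $F$ is continuous. By the definition of continuity for multivalued functions, there is then a positive integer $r$ and a continuous single-valued function $f : S(X,r) \to Y$ inducing $F$, so that $F(x) = \bigcup_{x' \in E_r^{-1}(x)} \{f(x')\}$ for every $x \in X$. Next I would verify that $S(X,r)$ is finite. For a fixed $p \in X \subset \Z^n$, the fiber $E_r^{-1}(p)$ consists exactly of the points $(p_1 + a_1/r, \dots, p_n + a_n/r)$ with each $a_i \in \{0,1,\dots,r-1\}$, so $|E_r^{-1}(p)| = r^n$ and therefore $|S(X,r)| = r^n\,|X| < \infty$.

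Then I would compute the total image of $F$ and identify it with the ordinary image of $f$. Since the fibers $E_r^{-1}(x)$ partition $S(X,r)$ as $x$ ranges over $X$, unwinding the definition of the induced function gives
\[ F(X) = \bigcup_{x \in X} F(x) = \bigcup_{x \in X}\bigcup_{x' \in E_r^{-1}(x)} \{f(x')\} = \bigcup_{x' \in S(X,r)} \{f(x')\} = f(S(X,r)). \]
Finally, because $F$ is a surjection, $F(X) = Y$, so $Y = f(S(X,r))$ is the image of a finite set and is therefore finite, contradicting the hypothesis that $Y$ is infinite. Hence $F$ cannot be continuous.

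I do not anticipate any genuine obstacle; the whole argument is a finiteness observation, and the only two points needing a little care are the count $|S(X,r)| = r^n|X|$ and the identity $F(X) = f(S(X,r))$. It is worth noting that the continuity of $f$ is never actually used beyond guaranteeing its existence: the real content is simply that the definition of multivalued continuity forces $F$ to factor through a finite set whenever $X$ is finite.
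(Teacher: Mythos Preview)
Your proof is correct and follows essentially the same approach as the paper: both argue that continuity would force $F$ to be induced by a single-valued $f$ on the finite set $S(X,r)$, making the image finite and contradicting surjectivity onto the infinite $Y$. The only cosmetic difference is that the paper first applies pigeonhole to find a single point $x'\in X$ with $F(x')$ infinite and derives the contradiction locally from the finiteness of $E_r^{-1}(x')$, whereas you work globally with $F(X)=f(S(X,r))$; the underlying idea is identical.
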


\begin{proof}
Since $F$ is a surjection, $X$ is finite, and $Y$
is infinite, there exists $x' \in X$ such that
$F(x')$ is an infinite set. Therefore, no continuous
single-valued function $f: S(X,r) \rightarrow Y$ induces 
$F$, since for such a function,
$\bigcup_{x \in E_r^{-1}(x')} \{f(x) \}$
is finite.
\end{proof}

\begin{cor}
Let $F: X \multimap Y$ be the
multivalued function
between digital images defined by
$F(x)=Y$ for all $x \in X$. If $X$ is finite and $Y$
is infinite and connected, then
$F$ is connectivity preserving but not continuous.
\end{cor}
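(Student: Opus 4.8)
The plan is to derive both halves of the corollary directly from the two immediately preceding propositions, so that essentially no new work is required beyond checking that their hypotheses hold for the constant map $F(x)=Y$. Before starting, I would record the standing convention that continuity in the subdivision sense is only defined for images embedded in some $(\Z^n,\kappa)$; accordingly I regard $X$ and $Y$ as so embedded, matching the hypothesis of Proposition~\ref{finite-to-infinite}.

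First I would establish connectivity preservation. Since $Y$ is assumed connected, $F$ is precisely the multivalued function described in Proposition~\ref{1-to-all}. Hence that proposition applies verbatim and yields that $F$ is connectivity preserving. This half requires no computation at all.

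Next I would establish non-continuity by invoking Proposition~\ref{finite-to-infinite}. The one hypothesis of that proposition not stated outright in the corollary is that $F$ is a surjection, so the only verification needed is that $F(X)=\bigcup_{x\in X}F(x)=Y$, which is immediate because $F(x)=Y$ for every $x\in X$. Combined with the assumptions that $X$ is finite and $Y$ is infinite, Proposition~\ref{finite-to-infinite} then gives directly that $F$ is not continuous.

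The main (indeed the only) point of care here is bookkeeping rather than mathematics: one must make sure the ambient lattice is in place so that the word \emph{continuous} is meaningful, and one must confirm the surjectivity hypothesis that is left implicit in the statement. With those two observations in hand there is no genuine obstacle, and the result follows as a straightforward combination of Propositions~\ref{1-to-all} and~\ref{finite-to-infinite}.
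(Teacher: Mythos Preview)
Your proposal is correct and follows exactly the same approach as the paper, which simply cites Propositions~\ref{1-to-all} and~\ref{finite-to-infinite}. Your additional remarks about verifying surjectivity and the ambient embedding are reasonable bookkeeping but add nothing beyond what the paper's one-line proof already implies.
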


\begin{proof}
This follows from Propositions~\ref{1-to-all} 
and~\ref{finite-to-infinite}.
\end{proof}

Examples of connectivity preserving but not continuous multivalued functions on finite spaces are harder to construct, since one must show that a given connectivity preserving map $X \multimap Y$ cannot be induced by any map on any subdivision. After some more development we will give such an example in Example \ref{review-square-boundary}.

Other terminology we use includes the following.
Given a digital image $(X,\kappa) \subset \Z^n$ and $x \in X$, the set of points adjacent to $x \in \Z^n$, the
neighborhood of $x$ in $\Z^n$, and the boundary of $X$
in $\Z^n$ are, respectively,
\[N_{\kappa}(x) = \{y \in \Z^n \, | \, y \mbox{ is }
    \kappa\mbox{-adjacent to }x\},\]
\[N_{\kappa}^*(x) = N_{\kappa}(x) \cup \{x\},
\]
and
\[ \delta_{\kappa}(X)=\{y \in X \, | \, 
    N_{\kappa}(y) \setminus X \neq \emptyset \}.
\]

\section{Other notions of multivalued continuity}
Other notions of continuity have been given
for multivalued functions between graphs (equivalently,
between digital images). We have the following.

\begin{definition}
\rm{~\cite{Tsaur}}
\label{Tsaur-def}
Let $F: X \multimap Y$ be a multivalued function between
digital images.
\begin{itemize}
\item $F$ has {\em weak continuity} if for each pair of
      adjacent $x,y \in X$, $f(x)$ and $f(y)$ are adjacent
      subsets of $Y$.
\item $F$ has {\em strong continuity} if for each pair of
      adjacent $x,y \in X$, every point of $f(x)$ is adjacent
      or equal to some point of $f(y)$ and every point of 
      $f(y)$ is adjacent or equal to some point of $f(x)$.
     \qed
\end{itemize}
\end{definition}

\begin{prop}
\label{mild-and-weak}
Let $F: X \multimap Y$ be a multivalued function between
digital images. Then $F$ is connectivity preserving if and
only if $F$ has weak continuity and for all $x \in X$,
$F(x)$ is connected.
\end{prop}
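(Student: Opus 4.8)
The plan is to reduce the proposition directly to Theorem~\ref{mildadj}, which already characterizes connectivity preservation in terms of two conditions. The first condition there, that $F(x)$ is connected for every $x \in X$, is verbatim one of the two conditions in the present statement. The second condition there, that $F(x)$ and $F(x')$ are adjacent whenever $x,x'$ are adjacent, is exactly the definition of weak continuity from Definition~\ref{Tsaur-def}. So once the terminology is unwound, the two characterizations coincide, and the proof amounts to a translation of Theorem~\ref{mildadj} into the vocabulary of Definition~\ref{Tsaur-def}.

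Concretely, I would argue both implications by appealing to Theorem~\ref{mildadj}. First suppose $F$ is connectivity preserving. By Theorem~\ref{mildadj}, $F(x)$ is connected for every $x$, and for adjacent $x,x'$ the sets $F(x)$ and $F(x')$ are adjacent; the latter statement is precisely weak continuity. Conversely, suppose $F$ has weak continuity and $F(x)$ is connected for all $x$. Weak continuity says adjacent points have adjacent image sets, which is the second bullet of Theorem~\ref{mildadj}, and the hypothesis on point-images is the first bullet. Hence both conditions of Theorem~\ref{mildadj} are satisfied, so $F$ is connectivity preserving.

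The only point requiring care is confirming that the phrase ``$F(x)$ and $F(x')$ are adjacent subsets of $Y$'' in the definition of weak continuity means the same thing as ``the sets $F(x)$ and $F(x')$ are adjacent'' in Theorem~\ref{mildadj}: both refer to the single notion of adjacency of subsets fixed earlier in Section~\ref{prelims}, namely the existence of points $a \in F(x)$ and $b \in F(x')$ with $a = b$ or $a$ adjacent to $b$. Since both statements invoke that same definition, there is no genuine obstacle here, and I expect this to be the shortest proof in the section.
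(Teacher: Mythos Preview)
Your proposal is correct and matches the paper's own proof, which simply states ``This follows from Theorem~\ref{mildadj}.'' You have merely made explicit the translation between the two bullets of Theorem~\ref{mildadj} and the conditions of weak continuity plus connected point-images, which is exactly the intended one-line argument.
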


\begin{proof}
This follows from Theorem~\ref{mildadj}.
\end{proof}

\begin{exl}
\label{pt-images-discon}
If $F: [0,1]_{\Z} \multimap [0,2]_{\Z}$ is defined by
$F(0)=\{0,2\}$, $F(1)=\{1\}$, then $F$ has both weak and
strong continuity. Thus a multivalued function that has weak or strong continuity need not
have connected point-images. By Theorem~\ref{mildadj} and
Proposition~\ref{pt-images-connected} it
follows that neither having weak continuity nor having
strong continuity implies that a multivalued function is
connectivity preserving or continuous.
$\Box$
\end{exl}

\begin{exl}
Let $F: [0,1]_{\Z} \multimap [0,2]_{\Z}$ be defined by
$F(0)=\{0,1\}$, $F(1)=\{2\}$. Then $F$ is continuous and
has weak continuity but
does not have strong continuity. $\Box$
\end{exl}

\begin{prop}
Let $F: X \multimap Y$ be a multivalued function between
digital images. If $F$ has strong continuity and for
each $x \in X$, $F(x)$ is connected, then $F$ is
connectivity preserving.
\end{prop}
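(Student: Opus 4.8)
The plan is to route everything through Proposition~\ref{mild-and-weak}, which says that $F$ is connectivity preserving exactly when $F$ has weak continuity and every point-image $F(x)$ is connected. The second of these conditions is handed to us directly as a hypothesis, so the entire task reduces to showing that strong continuity implies weak continuity.

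First I would unwind the two definitions side by side. Fix adjacent points $x,x'\in X$. To verify weak continuity at this pair I must exhibit a point $a\in F(x)$ and a point $b\in F(x')$ that are equal or adjacent, since by the paper's convention this is precisely what it means for the sets $F(x)$ and $F(x')$ to be adjacent. Strong continuity supplies such a witness almost for free: pick any $a\in F(x)$, and apply the first clause of strong continuity, which guarantees that $a$ is equal or adjacent to some point $b\in F(x')$. This pair $(a,b)$ is exactly the required witness, so $F(x)$ and $F(x')$ are adjacent, and since $x,x'$ were an arbitrary adjacent pair, $F$ has weak continuity. Combining this with the hypothesis that each $F(x)$ is connected, Proposition~\ref{mild-and-weak} immediately gives that $F$ is connectivity preserving.

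There is essentially no hard part here; the only step I would pause over is the choice of the point $a\in F(x)$, which silently requires $F(x)$ to be nonempty. This is supplied by the hypothesis that each $F(x)$ is connected, under the standing convention that a connected digital image is nonempty. It is worth noting that this nonemptiness hypothesis is genuinely needed rather than cosmetic: if empty point-images were allowed, strong continuity could hold vacuously at a point whose image is empty while weak continuity fails there, so the connectedness assumption is doing real work in bridging strong continuity to connectivity preservation. Beyond flagging this point, the argument is a direct translation of definitions followed by an appeal to Proposition~\ref{mild-and-weak}.
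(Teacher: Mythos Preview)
Your proof is correct and follows essentially the same route as the paper's. The paper's proof is a two-line sketch offering two variants: a direct appeal to Definition~\ref{Tsaur-def} and Theorem~\ref{mildadj}, and ``alternately'' the observation that strong continuity implies weak continuity combined with Proposition~\ref{mild-and-weak}; your argument is a fleshed-out version of this second variant, with the added (reasonable but inessential) remark about nonemptiness of point-images.
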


\begin{proof}
The assertion follows from Definition~\ref{Tsaur-def} and
Theorem~\ref{mildadj}. Alternately, it follows from
Proposition~\ref{mild-and-weak}, since strong continuity implies
weak continuity.
\end{proof}

The following shows that not requiring the images of
points to be connected yields topologically unsatisfying 
consequences for weak and strong continuity.

\begin{exl}
Let $X$ and $Y$ be nonempty digital images. Let
the multivalued function $f: X \multimap Y$ be defined by
$f(x)=Y$ for all $x \in X$.
\begin{itemize}
\item $f$ has both weak and strong continuity.
\item $f$ is connectivity preserving if and only if $Y$ is
      connected.
\end{itemize}
\end{exl}

\begin{proof} That $f$ has both weak and strong
continuity is clear from Definition~\ref{Tsaur-def}.

Suppose $f$ is connectivity preserving. Then for $x \in X$,
$f(x)=Y$ is connected. Conversely, if $Y$ is connected,
it follows easily from Definition~\ref{mildly} that
$f$ is connectivity preserving.
\end{proof}

As a specific example consider $X= \{0\} \subset \Z$ and $Y = \{0,2\}$, all with $c_1$ adjacency. Then the function $F:X \multimap Y$ with $F(0) = Y$ has both weak and strong continuity, even though it maps a connected image surjectively onto a disconnected image.

\section{Composition}
Connectivity preservation of multivalued functions is preserved by compositions. For two multivalued functions $f:X\multimap Y$ and $g:Y \multimap Z$, let $g \circ f:X \multimap Z$ be defined by 
\[ g\circ f (x) = g(f(x)) = \bigcup_{y \in f(x)} g(y). \]

\begin{thm}
\label{composition-thm}
If $f:X\multimap Y$ and $g:Y \multimap Z$ are connectivity preserving, then $g\circ f:X\multimap Z$ is connectivity preserving.
\end{thm}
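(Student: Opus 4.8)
The plan is to argue directly from Definition~\ref{mildly}, bypassing the adjacency reformulation of Theorem~\ref{mildadj}. The entire content of the statement rests on a single bookkeeping identity, namely that forming set-images commutes with composition:
\[ (g\circ f)(A) = g(f(A)) \quad\text{for every } A\subset X. \]
Granting this, the theorem is immediate: if $A\subset X$ is connected, then $f(A)$ is connected because $f$ is connectivity preserving, and then $g(f(A))$ is connected because $g$ is connectivity preserving and $f(A)$ is a connected subset of $Y$; by the identity this set is exactly $(g\circ f)(A)$, so $(g\circ f)(A)$ is connected.

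So the only thing requiring verification is the displayed identity, which I would establish by unwinding the union definitions. On the one hand,
\[ (g\circ f)(A) = \bigcup_{x\in A} (g\circ f)(x) = \bigcup_{x\in A}\ \bigcup_{y\in f(x)} g(y), \]
using the pointwise definition of $g\circ f$. On the other hand,
\[ g(f(A)) = \bigcup_{y\in f(A)} g(y) = \bigcup_{y\in\bigcup_{x\in A} f(x)} g(y) = \bigcup_{x\in A}\ \bigcup_{y\in f(x)} g(y), \]
and the two expressions coincide. This is a routine reindexing of a double union, so I do not expect any genuine obstacle here; the whole point of the theorem is precisely that, unlike continuity, connectivity preservation behaves this transparently under composition.

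I would then apply Definition~\ref{mildly} twice as indicated and conclude. As an alternative I could instead verify the two conditions of Theorem~\ref{mildadj} for $g\circ f$: connectedness of the point-image $(g\circ f)(x)=g(f(x))$ would follow from connectedness of $f(x)$ together with connectivity preservation of $g$, and adjacency of $(g\circ f)(x)$ and $(g\circ f)(x')$ for adjacent $x,x'$ would follow by tracing an adjacent pair of points from $f(x),f(x')$ through $g$. However, that route forces a short case distinction on whether the witnessing points in $f(x)$ and $f(x')$ are equal or merely adjacent, whereas the direct argument above avoids this entirely and is shorter.
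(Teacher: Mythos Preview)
Your proof is correct and follows exactly the same route as the paper's: apply Definition~\ref{mildly} directly, using $(g\circ f)(A)=g(f(A))$ and the connectivity preservation of $f$ and then $g$. The only difference is that you spell out the reindexing identity $(g\circ f)(A)=g(f(A))$, which the paper simply asserts in its one-line argument.
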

\begin{proof}
We must show that $g\circ f(A) = g(f(A))$ is connected whenever $A$ is connected. Since $f$ is connectivity preserving we have $f(A)$ connected, and then since $g$ is connectivity preserving we have $g(f(A))$ connected.
\end{proof}

By contrast with Theorem~\ref{composition-thm}, Remark 4 of
~\cite{gs15} shows that composition does not always
preserve continuity in multivalued functions between 
digital images. The example given there has finite
digital images $X,Y,Z$ in $\Z^2$ and 
multivalued functions $F: X \rightarrow Y$,
$G: Y \rightarrow Z$ such that $F$ is 
$(4,k)$-continuous and $G$ is $(k,k')$-continuous
for $\{k,k'\} \subset \{4,8\}$, but
$G \circ F: X \rightarrow Z$ is not 
$(4,k')$-continuous. In fact, the example
presented in ~\cite{gs15} shows that even if
$F$ is a single-valued isomorphism, $G \circ F$ need
not be a continuous multivalued function. However, by 
Theorems~\ref{cont-hierarchy} and~\ref{composition-thm},
$G \circ F$ is $(4,k')$-connectivity preserving.

\section{Shy maps and their inverses}
\begin{definition}
\label{shy-def}
\cite{Boxer05}
Let $f: X \rightarrow Y$ be a
continuous surjection of digital images. We say $f$ is
{\em shy} if
\begin{itemize}
\item for each $y \in Y$, $f^{-1}(y)$ is connected, and
\item for every $y_0,y_1 \in Y$ such that $y_0$ and $y_1$ are
      adjacent, $f^{-1}(\{y_0,y_1\})$ is
      connected. 
\end{itemize}
\end{definition}

Shy maps induce surjections on fundamental groups
~\cite{Boxer05}.
Some relationships between shy maps $f$ and their inverses
$f^{-1}$ as multivalued functions were studied in
~\cite{Boxer14}, including a restricted analog of
Theorem~\ref{shy-thm} below.
We have the following.

\begin{thm}
\label{shy-thm}
Let $f: X \to Y$ be a 
continuous surjection between digital images.
Then $f$ is shy if and only if
$f^{-1}: Y \multimap X$ is a connectivity preserving multivalued
function.
\end{thm}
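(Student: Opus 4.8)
The plan is to reduce both the shy condition (Definition~\ref{shy-def}) and the connectivity-preservation condition to the adjacency-based reformulation of Theorem~\ref{mildadj}, and then observe that the two lists of conditions match up. First I would apply Theorem~\ref{mildadj} to the multivalued function $f^{-1}:Y\multimap X$: it is connectivity preserving if and only if (i) $f^{-1}(y)$ is connected for every $y\in Y$, and (ii) for every pair of adjacent $y_0,y_1\in Y$ the sets $f^{-1}(y_0)$ and $f^{-1}(y_1)$ are adjacent. I would then place this beside Definition~\ref{shy-def}, which requires (i$'$) $f^{-1}(y)$ connected for each $y$, and (ii$'$) $f^{-1}(\{y_0,y_1\})$ connected for each adjacent pair $y_0,y_1$. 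Condition (i) is literally identical to (i$'$), so the entire theorem reduces to proving that, under (i), condition (ii) is equivalent to condition (ii$'$).

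For this equivalence I would fix adjacent points $y_0,y_1\in Y$ and set $A=f^{-1}(y_0)$ and $B=f^{-1}(y_1)$. Since $f$ is single-valued these sets are disjoint, since $f$ is surjective they are nonempty, and by (i) each is connected; moreover $f^{-1}(\{y_0,y_1\})=A\cup B$. The implication (ii)$\Rightarrow$(ii$'$) is then immediate from the fact recorded in the preliminaries that a union of connected adjacent sets is connected: if $A$ and $B$ are adjacent, then $A\cup B$ is connected. For the reverse implication (ii$'$)$\Rightarrow$(ii), I would assume $A\cup B$ is connected, choose $a\in A$ and $b\in B$, and take a path from $a$ to $b$ lying in $A\cup B$; because $A$ and $B$ are disjoint, every vertex of the path lies in exactly one of them, so there is a consecutive pair on the path with the first vertex in $A$ and the second in $B$. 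These are adjacent points of $A$ and $B$, so $A$ and $B$ are adjacent, which is exactly (ii).

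The only step requiring genuine care is this last crossing argument, and the subtlety is entirely in the disjointness: it is the single-valuedness of $f$ that guarantees $f^{-1}(y_0)\cap f^{-1}(y_1)=\emptyset$, and hence that a connecting path must at some point pass from $A$ to $B$ along an edge, producing the required adjacency. I would flag that the surjectivity hypothesis is also used, to ensure $A$ and $B$ are nonempty (so that the notion of adjacency between them is meaningful and points $a,b$ exist), and that continuity of $f$ is not actually needed for this particular equivalence beyond what is already built into the hypotheses of both definitions. With these observations in hand the proof is a short bookkeeping argument matching (i) with (i$'$) and (ii) with (ii$'$) via the disjoint-connected-sets lemma.
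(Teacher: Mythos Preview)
Your proposal is correct and follows exactly the approach the paper indicates: apply Theorem~\ref{mildadj} to $f^{-1}$ and compare the resulting two conditions with those of Definition~\ref{shy-def}. The paper's own proof is a single sentence (``This follows immediately from Theorem~\ref{mildadj} and Definition~\ref{shy-def}''), and your write-up simply supplies the details of the one nontrivial step the paper leaves implicit, namely that for connected nonempty $A=f^{-1}(y_0)$ and $B=f^{-1}(y_1)$, adjacency of $A$ and $B$ is equivalent to connectedness of $A\cup B$.
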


\begin{proof}
This follows immediately from Theorem~\ref{mildadj} and
Definition~\ref{shy-def}.
\end{proof}

\section{Morphological operators}\label{morphologysection}
In ~\cite{egs08,egs12}, it was shown that several fundamental
operations of mathematical morphology can be performed by
using continuous multivalued functions on digital images.
In this section, we obtain similar results using
connectivity preserving multivalued functions. In order to define the morphological operators, we must assume in this section that all images $X$ under consideration are embedded in $\Z^n$ for some $n$ with a globally defined adjacency relation $\kappa$. Thus in this section we always have $(X,\kappa) \subset (\Z^n, \kappa)$. The work in \cite{egs08,egs12} focuses exclusively on $n=2$, and $\kappa$ being 4- or 8-adjacency.
Our results have the advantage of being applicable in any dimensions and using any (globally defined) adjacency relation.

\subsection{Dilation and erosion}
In the following, the use of $k=4$ or $k=8$ indicates 
4-adjacency or 8-adjacency, respectively,
in $\Z^2$.

Dilation~\cite{Soille} of a binary image can be regarded as a 
method of magnifying or swelling the image. A common method
of performing a dilation of a digital image
$(X,\kappa) \subset (\Z^n,\kappa)$
is to take the dilation 
\[ D_{\kappa}(X) = 
   \bigcup_{x \in X}N_{\kappa}^*(x). \]

\begin{thm}
\label{egs-dilation}
\rm{(\cite{egs12}; proof corrected in~\cite{gs15})}
Given $(X,k) \subset (\Z^2,k)$, the multivalued functions
$\tilde{D}_k: X \rightarrow D_k(X) \subset \Z^2$ defined by
$\tilde{D}_k(x) = N_k^*(x)$, where
$k \in \{4,8\}$, are both $(4,4)$-continuous and
$(8,8)$-continuous. \qed
\end{thm}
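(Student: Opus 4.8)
The plan is to prove continuity in the sense of Escribano--Giraldo--Sastre directly from the definition, by exhibiting, for each $k\in\{4,8\}$, a subdivision of $X$ together with a single-valued continuous map inducing $\tilde D_k$. Since $N_4^*(x)$ has five points and $N_8^*(x)$ has nine, while the fiber $E_r^{-1}(x)$ of the $r$-th subdivision has $r^2$ points, the smallest usable subdivision is $S(X,3)$, whose fiber over $x=(x_1,x_2)$ consists of the nine sub-pixels $(x_1+i/3,\,x_2+j/3)$ with $i,j\in\{0,1,2\}$. I will work throughout with $r=3$ and, for each $k$, build a single map $f_k:S(X,3)\to D_k(X)$ that is simultaneously $(4,4)$- and $(8,8)$-continuous, so that all four claimed continuities follow at once (recall that a single-valued map continuous for one of these adjacencies need not be continuous for the other, so both must be checked).

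For $\tilde D_8$ I would define $f_8$ by sending the sub-pixel $(i,j)$ of $x$ to $x+(i-1,\,j-1)$. The image of the fiber over $x$ is then exactly the $3\times 3$ block $N_8^*(x)$, so $f_8$ induces $\tilde D_8$ and lands in $D_8(X)=\bigcup_x N_8^*(x)$. For $\tilde D_4$ I would send the center sub-pixel $(1,1)$ and all four corner sub-pixels $(0,0),(0,2),(2,0),(2,2)$ to $x$, and the four edge-midpoint sub-pixels $(0,1),(2,1),(1,0),(1,2)$ to the arms $x-(1,0),\,x+(1,0),\,x-(0,1),\,x+(0,1)$ respectively; the image of the fiber is then the ``plus'' $N_4^*(x)$, so $f_4$ induces $\tilde D_4$.

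The bulk of the work is verifying that each $f_k$ is continuous for both adjacencies, which I would organize by the relative position of two adjacent sub-pixels. When both lie in the same pixel, their images differ by $(i-i',\,j-j')$ (for $f_8$) or lie among $x$ and its arms (for $f_4$), and the check is immediate from the translation structure. When they straddle a shared edge of two edge-adjacent pixels, a short computation shows the images are equal or differ by one of $(\pm 1,0),(0,\pm 1)$ when the sub-pixels are $4$-adjacent, and by a vector with entries in $\{-1,0,1\}$ when they are $8$-adjacent.

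The step I expect to be the main obstacle---and the one responsible for the error in \cite{egs08,egs12} corrected in \cite{gs15}---is the behavior at a corner where the four pixels $x,\,x+(1,0),\,x+(0,1),\,x+(1,1)$ meet. Under $8$-adjacency the four corner sub-pixels abutting this corner form a mutually $8$-adjacent $2\times 2$ block, so continuity forces their four images to be pairwise equal or $8$-adjacent. I would check that for $f_8$ these images are precisely $x+(1,1),\,x+(0,1),\,x+(1,0),\,x$, and that for $f_4$ they are $x,\,x+(1,0),\,x+(0,1),\,x+(1,1)$; in both cases the four images again form a $2\times 2$ block and hence are mutually $8$-adjacent, while the $4$-adjacent sub-pixel pairs (the horizontal and vertical neighbors only) map to $4$-adjacent points. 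Confirming this corner compatibility, together with the routine interior and edge cases above, completes the proof that each $f_k$ induces $\tilde D_k$ and is both $(4,4)$- and $(8,8)$-continuous.
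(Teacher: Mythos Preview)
The paper does not give its own proof of this theorem: the statement is attributed to \cite{egs12} (with the correction in \cite{gs15}) and closed immediately with \qed. Your proposal therefore cannot be compared to a proof in the present paper, but it does supply exactly what the cited references provide, namely an explicit inducing map on the third subdivision.

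Your construction is correct. The map $f_8$ sending the sub-pixel $(i,j)$ of $x$ to $x+(i-1,j-1)$ has fiber image $N_8^*(x)$, and the displacement between images of adjacent sub-pixels is always the same as (or smaller than) the displacement between the sub-pixels themselves, which gives both $(4,4)$- and $(8,8)$-continuity at once. The map $f_4$ collapsing corners and center to $x$ and sending edge-midpoints to the four arms has fiber image $N_4^*(x)$; the within-pixel, across-edge, and across-corner checks you outline all go through, and in particular the four corner sub-pixels meeting at a vertex of the pixel grid do map to a $2\times 2$ block $\{x,\,x+(1,0),\,x+(0,1),\,x+(1,1)\}$ exactly as you claim, with the $4$-adjacent pairs among them going to $4$-adjacent images. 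Your remark that the corner case is where the original argument in \cite{egs12} went wrong, and hence deserves explicit attention, is on point and matches the reason \cite{gs15} is cited alongside \cite{egs12}.
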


\begin{thm}
\label{our-dilation}
Given a digital image $(X,\kappa) \subset (\Z^n,\kappa)$,
the multivalued function
$\tilde{D}_{\kappa}: X \rightarrow D_{\kappa}(X) \subset \Z^n$
defined by $\tilde{D}_{\kappa}(x) = N_{\kappa}^*(x)$
is connectivity preserving.
\end{thm}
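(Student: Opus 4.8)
The plan is to invoke the adjacency reformulation of connectivity preservation, Theorem~\ref{mildadj}, which reduces the claim to verifying two conditions: that each point-image $\tilde{D}_{\kappa}(x) = N_{\kappa}^*(x)$ is connected, and that $\kappa$-adjacent points of $X$ have adjacent images. Both reduce to simple observations about the neighborhood sets $N_{\kappa}^*(x)$, regarded as subsets of the codomain $D_{\kappa}(X) \subset \Z^n$. I would first note that every point of $N_{\kappa}^*(x)$ lies in $D_{\kappa}(X) = \bigcup_{x \in X} N_{\kappa}^*(x)$, so the images are genuine subsets of the stated codomain.

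For the first condition, I would observe that $N_{\kappa}^*(x) = N_{\kappa}(x) \cup \{x\}$ is a star centered at $x$: by definition every point of $N_{\kappa}(x)$ is $\kappa$-adjacent to $x$. Hence $x$ is equal or adjacent to every element of $N_{\kappa}^*(x)$, so any two points of $N_{\kappa}^*(x)$ are joined by the length-two path through $x$, and therefore $N_{\kappa}^*(x)$ is connected.

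For the second condition, suppose $x, x' \in X$ are $\kappa$-adjacent. Then $x \in N_{\kappa}^*(x)$ trivially, and since $x$ is $\kappa$-adjacent to $x'$ we also have $x \in N_{\kappa}(x') \subset N_{\kappa}^*(x')$. Thus $x \in N_{\kappa}^*(x) \cap N_{\kappa}^*(x')$, so the two image sets have nonempty intersection and are in particular adjacent. By Theorem~\ref{mildadj}, $\tilde{D}_{\kappa}$ is connectivity preserving.

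I expect no serious obstacle here, as the argument is a direct application of Theorem~\ref{mildadj}; the second condition in fact holds because the images genuinely overlap, which is stronger than mere adjacency. The only point deserving care is the bookkeeping that $N_{\kappa}^*$ is computed via $\kappa$-adjacency in the ambient $\Z^n$ rather than within $X$, and that all these neighborhood points lie in the codomain $D_{\kappa}(X)$. This ambient-neighborhood feature is also precisely what lets the result hold in every dimension $n$ and for any globally defined adjacency $\kappa$, in contrast to the dimension- and adjacency-specific Theorem~\ref{egs-dilation}.
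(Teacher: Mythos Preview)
Your proof is correct and follows essentially the same approach as the paper: verify the two conditions of Theorem~\ref{mildadj} by noting that $N_{\kappa}^*(x)$ is connected (as a star about $x$) and that the images of adjacent points overlap. The paper's version is terser and exhibits $x' \in \tilde{D}_{\kappa}(x)$ rather than your $x \in N_{\kappa}^*(x) \cap N_{\kappa}^*(x')$, but this is the same observation.
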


\begin{proof}
For every $x \in X$, $\tilde{D}_{\kappa}(x)$ is $\kappa$-connected.
Given $\kappa$-adjacent points $x,x' \in X$, we have
$x' \in \tilde{D}_{\kappa}(x)$, so $\tilde{D}_{\kappa}(x)$ and 
$\tilde{D}_{\kappa}(x')$ are $\kappa$-adjacent.
The assertion follows from Theorem~\ref{mildadj}.
\end{proof}

More general dilations are defined as follows. Let
$X \subset \Z^n$ be a digital image and let 
$B \subset \Z^n$, with the origin of $\Z^n$ a member of $B$.
We call $B$ a {\em structuring element}. Given
$x \in \Z^n$, let $t_x$ be the translation by $x$:
$t_x(y)=x+y$ for all $y\in \Z^n$ .
The {\em dilation of $X$ by $B$} is 
\[ D_B(X)= \bigcup_{x \in X}t_x(B). \]
We have the following.

\begin{thm}
\label{gen-dil}
Let $X \subset \Z^n$ be a digital image with $c_u$-adjacency
for $1 \leq u \leq n$ and let 
$B \subset \Z^n$ be a structuring element. If $B$ is
$c_u$-connected, then the multivalued dilation function
$\tilde{D}_B: X \multimap D_B(X)$ defined by
$\tilde{D}_B(x) = t_x(B)$ is connectivity preserving.
\end{thm}

\begin{proof} Since $B$ is $c_u$-connected and 
$t_x$ is continuous,
$\tilde{D}_B(x)$ is connected for all $x \in X$. If $x_0$ and 
$x_1$ are $c_u$-adjacent members of $X$ and $b \in B$, then
$x_0+b$ and $x_1+b$ are $c_u$-adjacent, so
$\tilde{D}_B(x_0)$ and $\tilde{D}_B(x_1)$ are $c_u$-adjacent. The
assertion follows from Theorem~\ref{mildadj}.
\end{proof}

Note that Theorem~\ref{gen-dil} is easily generalized to
any adjacency that is preserved by translations.

There are non-equivalent definitions of the erosion operation
in the literature. We will use the
definition of~\cite{egs12}:
the $\kappa$-erosion of $X \subset \Z^n$ is
\[ E_{\kappa}(X)=\Z^n 
   \setminus D_{\kappa}(\Z^n \setminus X).\]
In~\cite{egs12}, we find the following.

\begin{quote}
The erosion operation cannot be adequately modeled as a
digitally continuous multivalued function on the set of black
pixels since it can transform a connected set into a disconnected
set, or even delete it (for example, the erosion of a
curve is the empty set and, in general, the erosion of two
discs connected by a curve would be the disconnected union
of two smaller discs). However, since the erosion of a set
agrees with the dilation of its complement, the erosion operator
can be modeled by a continuous multivalued function
on the set of white pixels.
\end{quote}

It follows from Theorem~\ref{our-dilation} that the erosion
operator can be modeled by a connectivity preserving multivalued
function on the set of white pixels. I.e., as an analog of Corollary~\ref{their-erosion} below, we have 
Corollary~\ref{our-erosion} below. We use the notation
$\overline{E}_{\kappa}$ to suggest that the function's
image is the compliment of the erosion.

\begin{cor}
\label{their-erosion}
\rm{(\cite{egs12}; proof corrected in~\cite{gs15})}
Given $X \subset \Z^n$, the multivalued function
$\overline{E}_k: \Z^2 \setminus X \rightarrow \Z^2$
given by $\overline{E}_k(y)=N_k^*(y)$ for $y \in \Z^2 \setminus X$ is 
both $(4,4)$- and $(8,8)$-continuous, where
$k \in \{4,8\}$. \qed
\end{cor}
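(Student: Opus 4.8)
The plan is to observe that $\overline{E}_k$ is not a new object at all: it is precisely the multivalued dilation map $\tilde{D}_k$ of Theorem~\ref{egs-dilation}, but with the roles of foreground and background reversed. Concretely, for $y \in \Z^2 \setminus X$ the assignment $\overline{E}_k(y) = N_k^*(y)$ is the very same formula as $\tilde{D}_k(y) = N_k^*(y)$; the two maps are distinguished only by their domain, $\Z^2 \setminus X$ versus $X$. Since the definition of erosion gives $E_k(X) = \Z^2 \setminus D_k(\Z^2 \setminus X)$, the image $\overline{E}_k(\Z^2 \setminus X) = D_k(\Z^2 \setminus X) = \Z^2 \setminus E_k(X)$ is exactly the complement of the erosion, which justifies the chosen notation.

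First I would note that $(\Z^2 \setminus X, k)$ is itself a perfectly good digital image in $\Z^2$ --- any subset of $\Z^2$ is --- so it is a legitimate instance of the hypothesis of Theorem~\ref{egs-dilation}. Next I would check that the codomain, written loosely as $\Z^2$, causes no difficulty: continuity of a multivalued function is insensitive to enlarging the codomain, and in any case the actual image lands inside $D_k(\Z^2 \setminus X)$ as computed above. Then I would simply apply Theorem~\ref{egs-dilation} with $\Z^2 \setminus X$ substituted for $X$, which immediately yields that $\overline{E}_k$ is both $(4,4)$- and $(8,8)$-continuous for each $k \in \{4,8\}$.

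The point worth stressing is that the reduction itself is purely formal, so there is no real obstacle at this level. All the genuine work --- exhibiting, for a suitable subdivision of $\Z^2 \setminus X$, a single-valued continuous selection that induces $y \mapsto N_k^*(y)$, and verifying the adjacency bookkeeping for each value of $k \in \{4,8\}$ and for each of the $(4,4)$- and $(8,8)$-continuity claims --- is already encapsulated in Theorem~\ref{egs-dilation} (and this bookkeeping is precisely what~\cite{gs15} had to repair). Since we are entitled to invoke that theorem, the corollary follows in a single step, and the only care needed is the translation of ``erosion of $X$'' into ``dilation of $\Z^2 \setminus X$.''
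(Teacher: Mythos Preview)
Your reduction is correct and is exactly the argument implicit in the paper's presentation: the paper gives no proof of its own for this corollary (it ends with \qed\ and cites \cite{egs12,gs15}), but the surrounding discussion makes clear that erosion of $X$ is dilation of $\Z^2\setminus X$, so the result is Theorem~\ref{egs-dilation} applied to the complement. There is nothing to add.
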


\begin{cor}
\label{our-erosion}
Given $(X, \kappa) \subset (\Z^n,\kappa)$,
the multivalued function
$\overline{E}_{\kappa}: \Z^n \setminus X \rightarrow \Z^n$
given by $\overline{E}_\kappa(x)=N_{\kappa}^*(x)$ is 
connectivity preserving.
\end{cor}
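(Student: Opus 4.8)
The plan is to recognize that $\overline{E}_\kappa$ is given by precisely the same rule, $x \mapsto N_\kappa^*(x)$, as the dilation map $\tilde{D}_\kappa$ treated in Theorem~\ref{our-dilation}; the only difference is that its domain is the set of white pixels $\Z^n \setminus X$ rather than $X$. Since Theorem~\ref{our-dilation} is stated for an arbitrary digital image $(X,\kappa) \subset (\Z^n,\kappa)$, I would simply apply it with $\Z^n \setminus X$ (equipped with the same adjacency $\kappa$) playing the role of the image being dilated. This immediately yields that $\overline{E}_\kappa$ is connectivity preserving, and indeed this is exactly the reduction anticipated in the text preceding the statement, reflecting the fact that erosion of $X$ coincides with dilation of its complement.

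If a self-contained verification is preferred, I would instead check the two conditions of Theorem~\ref{mildadj} directly. First, for each $x \in \Z^n \setminus X$, the set $\overline{E}_\kappa(x) = N_\kappa^*(x)$ is $\kappa$-connected, since every point of $N_\kappa(x)$ is $\kappa$-adjacent to $x$ and so any two points of the set are joined through $x$. Second, for any $\kappa$-adjacent pair $x, x' \in \Z^n \setminus X$, we have $x' \in N_\kappa^*(x)$ and $x' \in N_\kappa^*(x')$, so the two point-images share the point $x'$ and are therefore adjacent. By Theorem~\ref{mildadj}, $\overline{E}_\kappa$ is connectivity preserving.

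I expect no genuine obstacle here: the argument is essentially identical to that of Theorem~\ref{our-dilation}. The only minor point worth noting is that the corollary states the codomain as all of $\Z^n$, whereas Theorem~\ref{our-dilation} uses $D_\kappa(\Z^n \setminus X)$; but this enlargement of the codomain is harmless, since both defining properties of connectivity preservation---connectedness of the point-images and adjacency of the images of adjacent points---are intrinsic to the relevant subsets and are unaffected by regarding those subsets inside a larger ambient space.
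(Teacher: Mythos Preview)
Your proposal is correct and matches the paper's approach: the paper's proof simply says the assertion follows as in the proof of Theorem~\ref{our-dilation}, which is exactly the reduction you describe. Your optional direct verification via Theorem~\ref{mildadj} is precisely that proof spelled out, and your remark about enlarging the codomain is a harmless detail.
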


\begin{proof} The assertion follows as in the
proof of Theorem~\ref{our-dilation}.
\end{proof}

\subsection{Closing and opening}
Like dilation, closing (or computing the closure of) a
digital image can be regarded as a way to swell the
image.

The closure operator $C_{\kappa}$ is the
result of a dilation followed by an erosion. Since we have
defined an erosion on $X$ as a dilation on 
$\Z^n \setminus X$, we cannot say 
that $C_{\kappa}$ is a composition of a dilation and an 
erosion, since the corresponding composition
$\overline{E}_{\kappa} \circ \tilde{D}_{\kappa}$
is not generally defined. However,
from the definitions above, the closure of $X$ can
be defined as
\[ C_{\kappa}(X)= \Z^n \setminus \tilde{D}_{\kappa}(\Z^n \setminus \bigcup_{x \in X}N_{\kappa}^*(x)).
\]
This yields the following results.

\begin{thm}
\label{their-closure}
\rm{\cite{egs12}}
Given $X \subset \Z^2$, the closure operator
$C_k$ is $(k,k)$-continuous,
$k \in \{4,8\}$. \qed
\end{thm}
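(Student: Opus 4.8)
The plan is to prove $C_k$ continuous in the subdivision sense of Escribano--Giraldo--Sastre by producing, for a single fixed subdivision factor $r$, one single-valued continuous map that induces $C_k$. I would deliberately avoid building this map by composing the dilation and erosion models: as the paragraph preceding the theorem observes, the erosion is realized as a dilation on the complement, so the composite $\overline{E}_k\circ\tilde{D}_k$ is not even defined as a multivalued function, and the continuity analogue of Theorem~\ref{composition-thm} fails in general by~\cite{gs15}. Composition is therefore a trap here, and a direct construction at the subdivision level is the safer route.

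The first real step is to record a local description of the closure. Writing $C_k(X)=\Z^2\setminus D_k(\Z^2\setminus D_k(X))$ and unwinding the two dilations, one finds
\[ p\in C_k(X) \iff N_k^*(p)\subseteq D_k(X), \]
that is, $p\in C_k(X)$ exactly when every $q\in N_k^*(p)$ satisfies $N_k^*(q)\cap X\neq\emptyset$. Hence membership of $p$ in $C_k(X)$ depends only on $X$ within $k$-distance $2$ of $p$. This bounded radius is what makes a single finite subdivision suffice. It also shows why $C_k$ is genuinely harder than a dilation: the closure combines an existential quantifier (the inner dilation) with a universal one (the outer erosion), so it is not a dilation by any enlarged structuring element and Theorem~\ref{gen-dil} does not apply.

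With the locality in hand, the second step is to define on a sufficiently fine subdivision $S(X,r)$ a surjective single-valued map $c\colon S(X,r)\to C_k(X)$ that spreads each pixel's subdivided block across the portion of $C_k(X)$ it is responsible for, exactly as the inducing map for $\tilde{D}_k$ in Theorem~\ref{egs-dilation} spreads a pixel across its neighborhood; this is natural because $X\subseteq C_k(X)\subseteq D_k(X)$, so the closure is an extensive, swelling operation. One then checks continuity of $c$ by the single-valued criterion: adjacent points of $S(X,r)$ must have equal or adjacent images in $(C_k(X),k)$. I expect this verification to be the main obstacle. Away from $\delta_k(X)$ the map $c$ is locally constant and continuity is immediate; the content lies entirely in the subdivision points straddling the boundary of $C_k(X)$, where the universal quantifier in the local description decides whether a gap gets filled and where a careless choice of $c$ would jump by more than one unit. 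The crux is to choose $r$ large enough that every such transition is absorbed into a single $k$-adjacency, and then to check the finitely many boundary configurations separately for $k=4$ and $k=8$.
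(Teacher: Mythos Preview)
The paper does not give its own proof of this statement: the theorem is quoted from \cite{egs12} and closed with a bare \qed immediately after the statement. There is therefore nothing in the present paper to compare your argument against; the authors simply import the result and reserve their own effort for the connectivity-preserving analogue, Theorem~\ref{our-closure}, whose proof uses the explicit pointwise description of $\tilde C_\kappa$ rather than any subdivision construction.

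As for your plan on its own terms: the strategy is the right shape. Your local characterization $p\in C_k(X)\iff N_k^*(p)\subseteq D_k(X)$ is correct, the bounded-radius observation is exactly what justifies a fixed finite $r$, and your warning against trying to compose $\overline E_k$ with $\tilde D_k$ is well taken. But what you have written is a programme, not a proof: you never name $r$, never define the inducing map $c$, and explicitly postpone the boundary case analysis that you yourself call ``the main obstacle.'' Since the entire content of the theorem lives in that deferred verification, the proposal as it stands has identified where the work is without doing any of it. If you want to complete it, you will need to commit to a specific $\tilde C_k$ (the formula in the proof of Theorem~\ref{our-closure} is the natural candidate), pick a concrete $r$, write down $c$ block by block, and carry out the adjacency check across $\delta_k(X)$ for each of $k=4$ and $k=8$.
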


\begin{thm}
\label{our-closure}
Given a digital image $(X,\kappa) \subset (\Z^n,\kappa)$,
the closure operator
$C_{\kappa}$ is connectivity preserving.
\end{thm}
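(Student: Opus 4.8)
The plan is to recognize that the closure operator, exactly like the erosion operator, is modeled not on the black pixels of $X$ but as a dilation on a set of white pixels. Reading off the defining formula
\[ C_{\kappa}(X)= \Z^n \setminus \tilde{D}_{\kappa}(\Z^n \setminus \bigcup_{x \in X}N_{\kappa}^*(x)), \]
I would first set $A = \bigcup_{x\in X}N_{\kappa}^*(x) = D_\kappa(X)$ and observe that the outermost operation applies the dilation multivalued function $y\mapsto N_\kappa^*(y)$ to the set $\Z^n\setminus A$. Thus the multivalued function realizing the closure is
\[ C_{\kappa}: \Z^n \setminus D_\kappa(X) \multimap \Z^n, \qquad C_\kappa(y)=N_\kappa^*(y), \]
whose total image is $\tilde{D}_\kappa(\Z^n\setminus D_\kappa(X)) = \Z^n\setminus C_\kappa(X)$, i.e.\ the complement of the closure (just as $\overline{E}_\kappa$ produces the complement of the erosion). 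Making this identification explicit is the first step.

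Once the closure operator is seen to be a dilation-type function $y\mapsto N_\kappa^*(y)$ restricted to the domain $\Z^n\setminus D_\kappa(X)$, the argument is immediate and mirrors the proofs of Theorem~\ref{our-dilation} and Corollary~\ref{our-erosion}. Each point-image $N_\kappa^*(y)$ is $\kappa$-connected, and whenever $y,y'$ are $\kappa$-adjacent we have $y'\in N_\kappa^*(y)$, so $N_\kappa^*(y)$ and $N_\kappa^*(y')$ are $\kappa$-adjacent. Both conditions of Theorem~\ref{mildadj} are therefore satisfied, and connectivity preservation follows. In effect one need only invoke Corollary~\ref{our-erosion} with the domain $\Z^n\setminus D_\kappa(X)$ in place of $\Z^n\setminus X$.

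The main conceptual obstacle, which I would stress, is that the closure is \emph{not} a genuine composition of the dilation and erosion multivalued functions, so one cannot simply appeal to Theorem~\ref{composition-thm}. As the surrounding text already notes, $\overline{E}_\kappa\circ\tilde{D}_\kappa$ is undefined, because the codomain of $\tilde{D}_\kappa$ sits inside $D_\kappa(X)$ while the domain of $\overline{E}_\kappa$ sits inside its complement $\Z^n\setminus D_\kappa(X)$, and these are disjoint. The resolution is exactly the identification made above: the set-level recipe ``dilate $X$, complement, dilate again, complement'' is carried out at the level of multivalued functions by a \emph{single} dilation-type map on the complement of the dilated image, and Theorem~\ref{our-closure} asserts connectivity preservation of precisely that map. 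Confirming that its domain is $\Z^n\setminus D_\kappa(X)$ and that its image equals $\Z^n\setminus C_\kappa(X)$ is routine set-chasing from the formula; since the dilation argument uses no structure on $X$, the result holds for every dimension $n$ and every adjacency $\kappa$.
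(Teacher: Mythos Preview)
Your reading of the theorem is off: the paper does \emph{not} model the closure on white pixels. Because $X \subset C_\kappa(X)$, the closure operator---like the dilation operator and unlike erosion or opening---is modeled as a multivalued function on the black pixels. The paper's proof defines
\[
\tilde{C}_{\kappa}: X \multimap C_\kappa(X), \qquad
\tilde{C}_{\kappa}(x)=
\begin{cases}
\{x\} & x \in X \setminus \delta_\kappa(X),\\
N_\kappa^*(x)\cap C_\kappa(X) & x \in \delta_\kappa(X),
\end{cases}
\]
and then checks the two hypotheses of Theorem~\ref{mildadj}: each $\tilde{C}_\kappa(x)$ is connected (it is contained in $N_\kappa^*(x)$ and contains $x$), and for adjacent $x,x'$ the images are adjacent because $x\in\tilde{C}_\kappa(x)$ and $x'\in\tilde{C}_\kappa(x')$.

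Your function $y\mapsto N_\kappa^*(y)$ on $\Z^n\setminus D_\kappa(X)$ is indeed connectivity preserving, but it is not the closure operator: as you yourself note, its image is $\Z^n\setminus C_\kappa(X)$, the \emph{complement} of the closure. You have essentially reproved Corollary~\ref{our-erosion} with $D_\kappa(X)$ in place of $X$, which is a true statement but not Theorem~\ref{our-closure}. The analogy you drew (``closure is like erosion'') is the wrong one; the paper's parallel is closure $\leftrightarrow$ dilation (black-pixel maps into a superset) and opening $\leftrightarrow$ erosion (white-pixel maps, Corollary~\ref{our-opening}). The point of the theorem is precisely that, unlike erosion, the closure \emph{can} be realized as a connectivity preserving multivalued function on $X$ itself.
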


\begin{proof} 
Note we can define a multivalued function
$\tilde{C}_{\kappa}: X \multimap C_{\kappa}(X)$ by
\[ \tilde{C}_{\kappa}(x)= \left \{ \begin{array}{ll}
   \{x\} & \mbox{if } x \in X \setminus \delta_{\kappa}(X); \\
   N_{\kappa}^*(x) \cap C_{\kappa}(x) &
      \mbox{if } x \in \delta_{\kappa}(X).
      \end{array} \right .
\]
Since $X \subset C_{\kappa}(X)$ and each point of
$N_{\kappa}^*(x)$ is $\kappa$-adjacent or equal to $x$, it follows
that $\tilde{C}_{\kappa}(x)$ is connected for all $x \in X$.
Further, for $\kappa$-adjacent $x, x' \in X$, we have
$x \in \tilde{C}_{\kappa}(x)$ and $x' \in \tilde{C}_{\kappa}(x')$, so
$f(x)$ and $f(x')$ are adjacent. The assertion follows
from Theorem~\ref{mildadj}.
\end{proof}

We find in ~\cite{egs12} the following.
\begin{quote}
As it happens in the case of the erosion, the opening operation
(erosion composed with dilation) cannot be adequately
modeled as a digitally continuous multivalued function on
the set of black pixels (the same examples used for the erosion
also work for the opening). However, since the opening
of a set agrees with the closing of its complement
~\cite{Soille}, the
k-opening operator can be modeled by a k-continuous multivalued
function on the set of white pixels.
\end{quote}

Thus, we define an opening operator for $X$ as
the closure operator on $\Z^n \setminus X$. Corresponding to
Corollary~\ref{their-opening} below, we have
Corollary~\ref{our-opening} below.

\begin{cor}
\label{their-opening}
\rm{\cite{egs12}}
Given $X \subset \Z^2$, the $k$-opening
operation on $X$ can be modeled as a
$(4,4)$- or $(8,8)$-continuous function
$\overline{O}_k: \Z^2 \setminus X \rightarrow \Z^2$.
\qed
\end{cor}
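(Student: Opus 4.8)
The plan is to exploit the morphological duality recorded in the paragraph preceding the statement: the $k$-opening operator on $X$ is \emph{defined} to be the $k$-closure operator applied to the complementary image $\Z^2 \setminus X$. Hence the multivalued function $\overline{O}_k : \Z^2 \setminus X \multimap \Z^2$ to be exhibited is simply the closure operator $C_k$ of Theorem~\ref{their-closure}, read with $\Z^2 \setminus X$ in the role of $X$. The whole content of the corollary is therefore to transport the continuity of the closure operator onto the complementary image, exactly as Corollary~\ref{their-erosion} transports the continuity of the dilation (Theorem~\ref{egs-dilation}) onto $\Z^2 \setminus X$.

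First I would note that $\Z^2 \setminus X$, carrying $k$-adjacency, is itself a legitimate digital image in $\Z^2$: nothing in the hypotheses of Theorem~\ref{their-closure} singles out $X$ among subsets of $\Z^2$. I may therefore instantiate Theorem~\ref{their-closure} with $\Z^2 \setminus X$ in place of $X$, which gives at once that the closure operator $C_k$ on $\Z^2 \setminus X$ is $(k,k)$-continuous for each $k \in \{4,8\}$. Unwinding the definition of the opening, this closure operator \emph{is} the function $\overline{O}_k$, so $\overline{O}_k$ is $(4,4)$- or $(8,8)$-continuous, as claimed.

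There is essentially no analytic obstacle, since the argument is a definitional substitution feeding an already-proved theorem; the only care needed is bookkeeping. One must confirm that the codomain declared for $\overline{O}_k$ contains $C_k(\Z^2 \setminus X)$ and that the single-valued continuous map on some subdivision $S(\Z^2 \setminus X, r)$ witnessing the continuity of $C_k$ does in fact induce $\overline{O}_k$ after complementation. If one instead wished to avoid quoting Theorem~\ref{their-closure}, the real work would resurface: one would have to construct, for a suitable $r$, a single-valued continuous function $f : S(\Z^2 \setminus X, r) \to \Z^2$ inducing the closure and verify its continuity directly --- and \emph{that} verification, exhibiting the right $r$ and checking adjacency under the induced map, would be the hard part, being of the same delicate nature as the subdivision arguments that \cite{gs15} had to repair in the dilation and erosion cases.
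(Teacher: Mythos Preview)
Your approach is correct and is exactly the one the paper implicitly endorses: the paper does not supply its own proof of this corollary (it is quoted from \cite{egs12} and closed with \qed), but the parallel Corollary~\ref{our-opening} is proved in one line by invoking Theorem~\ref{our-closure} on $\Z^n\setminus X$, which is precisely your substitution argument transported to the $(k,k)$-continuous setting via Theorem~\ref{their-closure}.
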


\begin{cor}
\label{our-opening}
Given $(X,\kappa) \subset (\Z^n,\kappa)$, the $\kappa$-opening
operation on $X$ can be modeled as a
connectivity preserving function
$\overline{O}_{\kappa}: \Z^n \setminus X \rightarrow \Z^n$.
\end{cor}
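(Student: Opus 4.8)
The plan is to reduce this corollary directly to the already-proved closure Theorem~\ref{our-closure}, in exact parallel with the way Corollary~\ref{our-erosion} was reduced to the dilation Theorem~\ref{our-dilation}.

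First I would recall the defining convention stated just before the corollary: the $\kappa$-opening operation on $X$ is, \emph{by definition}, the closure operator applied to the complementary image $\Z^n \setminus X$. Consequently the multivalued function $\overline{O}_{\kappa} : \Z^n \setminus X \multimap \Z^n$ is literally the closure multivalued function of Theorem~\ref{our-closure}, only with $\Z^n \setminus X$ playing the role that $X$ plays there. In particular, $\overline{O}_\kappa$ can be written out explicitly using the same two-case formula as $\tilde{C}_{\kappa}$, with $\Z^n \setminus X$ substituted for $X$ throughout.

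The key (and essentially only) step is to check that the hypotheses of Theorem~\ref{our-closure} are met by the complementary image. Since $\Z^n \setminus X$ is a subset of $\Z^n$ carrying the same adjacency $\kappa$, we have $(\Z^n \setminus X, \kappa) \subset (\Z^n, \kappa)$, so the theorem applies verbatim. Applying it with $\Z^n \setminus X$ in place of $X$ yields that the closure operator on $\Z^n \setminus X$ is connectivity preserving, and this closure operator is exactly $\overline{O}_{\kappa}$.

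I do not expect any genuine obstacle here. The entire content of the corollary is the observation---quoted from \cite{egs12}---that the opening of a set coincides with the closing of its complement, combined with the fact that connectivity preservation is an intrinsic property of a multivalued function on any digital image and makes no special reference to a particular subset of $\Z^n$. Modeling opening as an operation on the white pixels $\Z^n \setminus X$ was chosen precisely so that it becomes a closure, which Theorem~\ref{our-closure} has already shown to be connectivity preserving. One could alternatively give a self-contained argument via Theorem~\ref{mildadj}, verifying directly that each point-image is connected and that adjacent points have adjacent images, but this would merely repeat the proof of Theorem~\ref{our-closure} and is unnecessary.
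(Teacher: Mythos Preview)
Your proposal is correct and follows exactly the paper's approach: the paper's proof consists of the single sentence ``The assertion follows from Theorem~\ref{our-closure},'' and your proposal simply unpacks this reduction in more detail.
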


\begin{proof} 
The assertion follows from Theorem~\ref{our-closure}.
\end{proof}

\section{Retractions, connectivity preserving multivalued retractions, and deletion of subsets}
\label{retract-section}
A continuous single-valued or multivalued function, or a connectivity preserving multivalued function, $r$, from
a set $X$ to a subset $Y$ of $X$ is called a
{\em retraction} ~\cite{Borsuk}, a 
{\em multivalued retraction},
or a {\em connectivity preserving multivalued retraction}, respectively, if $r(y)=y$ 
(respectively, $r(y)=\{y\}$) for all $y \in Y$. In this case we say
$Y$ is a {\em retract of $X$}, a 
{\em multivalued retract of $X$}, or
a {\em connectivity preserving multivalued retract of $X$}, 
respectively. It is known 
~\cite{Boxer94} that the boundary of a digital square is 
not a retract of the square. By contrast, we have
the following.

\begin{exl}
\label{retract-example}
Let $X=[-1,1]_\Z^2$. Let $Y=X \setminus \{(0,0)\}$.
Then $(Y,8)$ is a connectivity preserving multivalued retract of $(X,8)$.
\end{exl}

\begin{proof}
It is easy to see that the multivalued function
$r: X \multimap Y$ given by
\[ r(x)=\left \{ \begin{array}{ll}
        Y & \mbox{if } x = (0,0); \\
        \{x\} & \mbox{if } x \in Y,
        \end{array} \right .
\]
is a connectivity preserving multivalued retraction of $(X,8)$ onto $(Y,8)$. As we will see below, $(Y,8)$ is not a multivalued retract of $(X,8)$, and thus $r$ is connectivity preserving but not continuous. 
\end{proof}

We can generalize the example given above in the following result.
The existence of connectivity preserving multivalued retractions is easily formulated in terms of connected images:
\begin{thm}
\label{setretract}
Let $X$ be connected and let $A\subset X$, $A \neq \emptyset$. Then $A$ is a connectivity preserving multivalued retract of $X$ if and only if $A$ is connected.
\end{thm}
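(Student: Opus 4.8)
The plan is to prove the two implications separately. The forward direction (``only if'') will follow almost immediately from Definition~\ref{mildly}, while the reverse direction (``if'') will require exhibiting an explicit retraction generalizing the one constructed in Example~\ref{retract-example}. Throughout, recall that a connectivity preserving multivalued retract is witnessed by a connectivity preserving multivalued function $r : X \multimap A$ with $r(y) = \{y\}$ for every $y \in A$, and in particular $r(x) \subset A$ for all $x \in X$.

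For the forward direction, suppose such an $r$ exists. I would first observe that $r(X) = A$: since each $r(x) \subset A$ we have $r(X) \subset A$, and since $r(y) = \{y\}$ for $y \in A$ we have $A \subset r(X)$. Because $X$ is connected and $r$ is connectivity preserving, $r(X)$ is connected by Definition~\ref{mildly}, so $A$ is connected. For the reverse direction, assuming $A$ is connected and nonempty, I would define $r : X \multimap A$ by
\[ r(x) = \left\{ \begin{array}{ll} \{x\} & \mbox{if } x \in A; \\ A & \mbox{if } x \in X \setminus A. \end{array} \right. \]
This manifestly fixes $A$ pointwise, so it is a connectivity preserving multivalued retraction onto $A$ as soon as it is connectivity preserving, which I would check via Theorem~\ref{mildadj}. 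The point-image condition holds because each $r(x)$ is either a singleton or the connected set $A$. For the adjacency condition, given adjacent $x, x' \in X$ I would split into three cases: if both lie in $A$, then $r(x) = \{x\}$ and $r(x') = \{x'\}$ are adjacent because $x$ and $x'$ are; if both lie outside $A$, then $r(x) = r(x') = A$ is a nonempty set adjacent to itself; and if exactly one, say $x$, lies in $A$, then $x \in \{x\} \cap A = r(x) \cap r(x')$, so the images meet and hence are adjacent.

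I do not expect a genuine obstacle here; the real content is simply the realization that a single connected target set $A$ can serve simultaneously as the image of every point of $X \setminus A$. The step requiring the most care is the mixed case of the adjacency verification, since that is where both the hypothesis $A \neq \emptyset$ and the retraction property (which forces the image of an interior point to contain that point, and hence to meet $A$) are actually used to guarantee adjacency of the images. Everything else reduces to the routine observations already packaged in Theorem~\ref{mildadj}.
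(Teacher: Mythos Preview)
Your proof is correct and uses the same retraction map as the paper. The only minor difference is that you verify connectivity preservation via the adjacency criterion of Theorem~\ref{mildadj}, whereas the paper checks Definition~\ref{mildly} directly by observing that for any connected $B\subset X$ one has $f(B)=B$ if $B\subset A$ and $f(B)=A$ otherwise.
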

\begin{proof}
First assume that $A$ is connected. Then define $f:X\multimap A$ by:
\[ 
f(x) = \begin{cases} \{x\} & \text{ if } x \in A, \\
A & \text{ if } x \not \in A.
\end{cases}
\]
$f$ clearly has the retraction property that $f(A)=A$ and
$f(x)=\{x\}$ for all $x \in A$. To show connectivity
preservation, let $B\subset X$ be a connected set, and we will show that $f(B)$ is connected. In the case that $B \subset A$ we have $f(B) = B$ is connected. Otherwise, $B \setminus A \neq \emptyset$
so we have $f(B) = A$ which was assumed to be connected. Thus $f$ is connectivity preserving,
so $A$ is a connectivity preserving multivalued retract of $X$ as desired.

For the converse, assume that $A$ is a connectivity preserving 
multivalued retract of $X$. Since $X$ is connected,
$A$ must be connected.
\end{proof}

Theorem~\ref{setretract} makes it easy to tell when one set is a connectivity preserving multivalued retract of another. The analogous question for continuous multivalued retracts is addressed in \cite{egs12} (corrected in \cite{gs15}), where the results are quite a bit more complicated, stated in terms of {\em simple points}, characterized by 
the following.

\begin{definition}
{\rm \cite{KR}}
Let $X \subset \Z^2$. Let 
$\{k,\overline{k}\}=\{4,8\}$.
Let $p \in X$. Then $p$ is a $k$-boundary
point of $X$ if and only if
$N_{\overline{k}}(p) \setminus X \neq \emptyset$. $\Box$
\end{definition}

\begin{thm}
\label{simple-pt}
{\rm ~\cite{rosenfeld79}}
Let $X \subset \Z^2$. Then $p \in X$ is
$k$-simple, $k \in \{4,8\}$, if and only if
$p$ is a $k$-boundary point of $X$ and the number of $k$-connected components of $N_8(p) \cap X$ that are $k$-adjacent to $p$ is equal to 1.
\qed
\end{thm}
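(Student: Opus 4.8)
The plan is to argue from the standard meaning of a simple point: I take $p$ to be $k$-simple exactly when deleting it preserves the topology of the image, which in $\Z^2$ I make precise as requiring that the inclusion $X\setminus\{p\}\hookrightarrow X$ induce a bijection on $k$-components of the foreground and that the passage from $\Z^2\setminus X$ to $(\Z^2\setminus X)\cup\{p\}$ induce a bijection on $\overline k$-components of the background (where $\{k,\overline k\}=\{4,8\}$). The first step is to observe that this is a purely local question: every point outside $N_8(p)$ has the same foreground/background status and the same neighbors before and after the deletion, so a foreground (resp.\ background) component can split, merge, appear, or vanish only on account of $k$-paths (resp.\ $\overline k$-paths) that are forced to route through $p$. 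Thus the entire equivalence is governed by the configuration of $X$ inside the $3\times 3$ block $N_8^*(p)$, and I would reduce both directions to bookkeeping on this block.

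For the forward direction, suppose $p$ is $k$-simple. If $p$ were not a $k$-boundary point then $N_{\overline k}(p)\subset X$, and after deletion $\{p\}$ becomes an isolated $\overline k$-component of the background disjoint from all preexisting ones, strictly increasing the number of background components and contradicting the bijection; hence $p$ is a $k$-boundary point. Now let $C$ be the number of $k$-components of $N_8(p)\cap X$ that are $k$-adjacent to $p$. If $C=0$ then $p$ is $k$-isolated in $X$ and its deletion destroys a foreground component; if $C\ge 2$ then the two or more local arcs meeting $p$ either fail to reconnect in $X\setminus\{p\}$, raising the foreground count, or do reconnect through the rest of $X$, which annihilates a background hole and changes the $\overline k$-count. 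Each possibility violates simplicity, so $C=1$.

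For the converse, suppose $p$ is a $k$-boundary point with $C=1$, and let $K$ be the unique $k$-component of $N_8(p)\cap X$ that is $k$-adjacent to $p$. Every foreground point $k$-adjacent to $p$ then lies in $K$, and $K$ is $k$-connected and does not contain $p$; consequently any $k$-path of $X$ through $p$ enters and leaves $p$ at points of $K$ and may be rerouted within $K$ to avoid $p$, so no foreground component is split or destroyed and the foreground bijection holds. The remaining and hardest step is to establish the dual statement, that exactly one $\overline k$-component of $N_8(p)\setminus X$ is adjacent to $p$, so that no background hole is created or merged. This is precisely where two-dimensionality enters: it rests on the duality between $k$- and $\overline k$-adjacency around the ring $N_8(p)$, whereby a configuration with a single foreground arc adjacent to $p$ and a nonempty background (the boundary-point hypothesis) cannot also present two separated background arcs. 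I expect this local duality --- equivalently a winding or Euler-characteristic count on the eight points of $N_8(p)$ --- to be the main obstacle, and the most reliable way to discharge it is to enumerate the finitely many $3\times 3$ patterns up to the symmetries of the square and verify the equivalence directly for each of $k\in\{4,8\}$.
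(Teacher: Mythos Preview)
The paper gives no proof of this statement: it is quoted as a result of Rosenfeld (the citation and the bare \qed\ signal that it is taken on authority), so there is nothing in the paper to compare your argument against.

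On its own merits your outline is the classical one and is essentially sound, but one remark is worth making. In the forward direction for $C\ge 2$ you argue by dichotomy: either the local arcs fail to reconnect in $X\setminus\{p\}$ (foreground count rises), or they do reconnect elsewhere, ``which annihilates a background hole.'' That second branch is not automatic; it is exactly a digital Jordan-curve assertion, i.e.\ the same planar $k$/$\overline k$ duality you later identify as the main obstacle in the converse. So the two-dimensional input is needed symmetrically in both directions, not only in the converse. Your proposed fallback of enumerating the $3\times 3$ configurations up to dihedral symmetry is precisely how Rosenfeld's original argument is carried out, and it discharges both gaps at once.
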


Continuous multivalued retracts relate to simple points as follows:
\begin{thm}
\label{simpleretract}
\rm{\cite[Theorem 5]{gs15}} Let $(X,8)\subset \Z^2$ be a connected digital image, and let $p\in X$. Then $X-\{p\}$ is a continuous multivalued retract of $X$ if and only if $p$ is a simple point. \qed
\end{thm}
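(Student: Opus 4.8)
The plan is to prove the two implications separately: the ``if'' direction by an explicit construction on a subdivision, and the ``only if'' direction by contrapositive via a topological obstruction. Throughout, recall that here $k=8$, so the boundary condition uses $N_4$, and $p$ being simple means that some $4$-neighbor of $p$ lies outside $X$ while $N_8(p)\cap X$ is a single $8$-connected component (necessarily $8$-adjacent to $p$). A useful preliminary observation is that $8$-adjacency ``seals'' diagonal gaps: two orthogonally-placed neighbors of $p$, such as $(p_1,p_2+1)$ and $(p_1+1,p_2)$, are themselves $8$-adjacent, so the present neighbors of an interior point enclose $p$.

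For the ``if'' direction I would assume $p$ is simple and build a single-valued continuous $f:S(X,r)\to X-\{p\}$, for a suitably chosen subdivision parameter $r$, inducing the desired multivalued retraction $F:X\multimap X-\{p\}$. Define $f$ to agree with the natural projection $E_r$ on every subdivision point lying over a point of $X-\{p\}$; this forces the retraction property $F(y)=\{y\}$ for $y\neq p$. It then remains to define $f$ on the $r\times r$ block of subdivision points lying over $p$. Here I would use simplicity: since $N_8(p)\cap X$ is a single $8$-connected arc and the missing $4$-neighbor leaves a gap, the block over $p$ can be continuously collapsed onto that arc (all of which lies in $X-\{p\}$) by pushing away from the gap. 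Continuity of $f$ is then verified by the single-valued adjacency criterion, comparing adjacent subdivision points block-by-block; by Theorem~\ref{cont-hierarchy} the induced $F$ is in particular connectivity preserving, consistent with Theorem~\ref{setretract}.

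For the ``only if'' direction I would argue the contrapositive. If $p$ is not simple, then either $p$ is not an $8$-boundary point, or $N_8(p)\cap X$ has at least two components $8$-adjacent to $p$. In the first case all four $4$-neighbors lie in $X$, so by the sealing property the present neighbors enclose $p$ and $X-\{p\}$ acquires a hole at $p$. Any continuous $f$ inducing a retraction restricts to $E_r$ on the blocks over $N_8(p)\cap X$, so a loop $\gamma$ encircling the block over $p$ within those blocks maps to a loop $\bar\gamma$ in $N_8(p)\cap X\subset X-\{p\}$ that encircles the hole and is therefore essential in the digital fundamental group of $X-\{p\}$; but $\gamma$ bounds the block over $p$ and so is null-homotopic in $S(X,r)$, contradicting continuity of $f$. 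In the second case, either $X-\{p\}$ is disconnected, which is immediately incompatible with continuity by Theorem~\ref{cont-hierarchy} together with Theorem~\ref{setretract} since $X$ is connected, or the two local arcs rejoin elsewhere, and a parallel argument shows that the image of the connected block over $p$ would be forced to join the arcs within $X-\{p\}$ while a surrounding loop remains essential, again a contradiction.

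I expect the main obstacle to be making the ``essential loop'' claim rigorous and uniform across both failure modes of the ``only if'' direction, since this requires a genuine digital homotopy argument rather than a combinatorial adjacency check; the multi-component case in particular demands care in separating the disconnecting subcase from the subcase where the arcs rejoin globally. A secondary difficulty lies in the ``if'' direction bookkeeping: selecting $r$ large enough and verifying continuity of the collapse of the block over $p$ across every boundary adjacency, especially near the missing neighbor, must be carried out explicitly.
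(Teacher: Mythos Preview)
The paper does not prove this theorem at all: it is quoted verbatim from \cite{gs15} (Theorem~5 there), marked with a terminal \qed, and used only as a black box to contrast with Theorem~\ref{setretract} and to justify Example~\ref{review-square-boundary}. There is therefore no in-paper proof to compare your proposal against.

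As an independent assessment of your sketch: the ``if'' direction is on the right track and is essentially how such results are proved---one exhibits an explicit continuous $f:S(X,r)\to X\setminus\{p\}$ by collapsing the block over $p$ onto the single $8$-component of $N_8(p)\cap X$. The bookkeeping you flag (choice of $r$, boundary adjacencies) is real but routine. Your ``only if'' direction, however, is where the genuine difficulty lies, and your plan leans on a digital-fundamental-group obstruction that is not straightforward to make rigorous here. The claim that the image loop $\bar\gamma$ is \emph{essential} in $X\setminus\{p\}$ presupposes control over $\pi_1(X\setminus\{p\})$ that you do not have for a general connected $X\subset\Z^2$: $X$ may already have holes, may not be simply connected, and the ``hole at $p$'' need not produce a new nontrivial class detectable by your particular $\bar\gamma$. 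Moreover, you need that a continuous $f:S(X,r)\to X\setminus\{p\}$ induces a well-defined homomorphism between the relevant digital fundamental groups and that null-homotopy of $\gamma$ in $S(X,r)$ transfers; this requires invoking or reproving nontrivial machinery. The actual argument in \cite{gs15} proceeds by a direct local combinatorial analysis of what any inducing $f$ must do on the subdivision cells adjacent to the block over $p$, avoiding homotopy invariants entirely; that route is both more elementary and more robust for this statement.
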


The requirement that $p$ be a simple point is a stronger condition than $X-\{p\}$ being connected, the condition for our Theorem \ref{setretract}. The authors of \cite{gs15} 
also obtain a similar result for 4-adjacency requiring additional hypotheses, and discuss removal of pairs of simple points. Their arguments become quite difficult and do not seem able to address removal of arbitrary subsets as in Theorem \ref{setretract}. 

Contrasting the results of Theorems \ref{setretract} and \ref{simpleretract} gives examples of maps on finite spaces that are connectivity preserving but not continuous. In particular, we have the following.

\begin{exl}
\label{review-square-boundary}
Let $X$ and $Y$ be the images in Example \ref{retract-example}.
\begin{itemize}
\item The point $(0,0)$ is not a simple point of $X$ and thus,
      $Y$ is not a continuous multivalued retract of $X$, although 
      $Y$ is a connectivity preserving multivalued retract of $X$.
\item The multivalued function $r$ of
      Example~\ref{retract-example} is connectivity preserving
      but not continuous.
\end{itemize}
\end{exl}

\begin{proof}
We saw in Example~\ref{retract-example} that
$r$ is connectivity preserving and that
$Y$ is a connectivity preserving multivalued retract of $X$.
\begin{itemize}
\item Clearly, $(0,0)$ is not a simple point of $X$.
      From Theorem~\ref{simpleretract}, $Y$ is not a 
      continuous multivalued retract of $X$.
\item Were $r$ continuous then $r$ would be a     
      multivalued retraction, contrary to  
      Theorem~\ref{simpleretract}.
\end{itemize}
\end{proof}

\section{Further remarks}
We have studied connectivity preserving multivalued
functions between digital images. This notion
generalizes continuous multivalued
functions. We have shown that composition, which 
does not preserve continuity for continuous multivalued functions, preserves connectivity preservation for multivalued functions between digital images. We have obtained a number of
results for connectivity preserving multivalued functions between digital images, concerning weak and strong continuity,
shy maps, morphological operators, and
retractions; many of our results are suggested by
analogues for continuous multivalued functions
in~\cite{egs08,egs12,gs15,Boxer14}.

\section{Acknowledgment}
We are grateful for the suggestions of the anonymous
reviewers.

\end{document}